\documentclass{article}

\usepackage{arxiv}

\usepackage[utf8]{inputenc} % allow utf-8 input
\usepackage[T1]{fontenc}    % use 8-bit T1 fonts
\usepackage{hyperref}       % hyperlinks
\usepackage{url}            % simple URL typesetting
\usepackage{booktabs}       % professional-quality tables
\usepackage{amsfonts}       % blackboard math symbols
\usepackage{nicefrac}       % compact symbols for 1/2, etc.
\usepackage{microtype}      % microtypography
\usepackage{lipsum}		% Can be removed after putting your text content
\usepackage{graphicx}
\usepackage[dvipsnames]{xcolor}
\usepackage{float}
\usepackage{graphicx}
\usepackage{subfig}

\usepackage{amsmath}
\usepackage{amsthm}
\usepackage{amssymb}
\usepackage{stmaryrd}
\usepackage{stackrel}
\usepackage{graphicx}
\usepackage{wasysym}
\usepackage{mathtools}
\usepackage{multirow}
\usepackage{caption} 
\captionsetup[table]{skip=4pt}
\makeatletter

%%%%%%%%%%%%%%%%%%%%%%%%%%%%%% LyX specific LaTeX commands.
\floatstyle{ruled}
\newfloat{algorithm}{tbp}{loa}
\providecommand{\algorithmname}{Algorithm}
\floatname{algorithm}{\protect\algorithmname}

%%%%%%%%%%%%%%%%%%%%%%%%%%%%%% Textclass specific LaTeX commands.
\theoremstyle{plain}
\newtheorem{thm}{\protect\theoremname}
\theoremstyle{definition}
\newtheorem{defn}{\protect\definitionname}
\theoremstyle{definition}
\newtheorem{problem}{\protect\problemname}
\theoremstyle{plain}
\newtheorem{lem}{\protect\lemmaname}
\theoremstyle{definition}

\theoremstyle{remark}
\newtheorem{rem}{\protect\remarkname}
\theoremstyle{plain}

\theoremstyle{plain}

%%%%%%%%%%%%%%%%%%%%%%%%%%%%%% User specified LaTeX commands.
\usepackage{cite}
\usepackage{algpseudocode}
\usepackage{setspace}
\usepackage{color}
\usepackage{comment}
\usepackage{tikz}
\tikzset{>=latex}
\usetikzlibrary{fit,automata,positioning,angles,quotes}

% \usepackage{fancyhdr}
% \pagestyle{fancy}
% \lhead{IEEE Robotics and Automation Letters.}
% \rhead{Cai et al. Presented in IROS 2021}

\makeatother

\usepackage{babel}
\providecommand{\corollaryname}{Corollary}
\providecommand{\definitionname}{Definition}
\providecommand{\examplename}{Example}
\providecommand{\lemmaname}{Lemma}
\providecommand{\problemname}{Problem}
\providecommand{\remarkname}{Remark}
\providecommand{\theoremname}{Theorem}

\usepackage{lipsum}

\newcommand\blfootnote[1]{%
	\begingroup
	\renewcommand\thefootnote{}\footnote{#1}%
	\addtocounter{footnote}{-1}%
	\endgroup
}

\SetSymbolFont{stmry}{bold}{U}{stmry}{m}{n}
%%%

\def\BibTeX{{\rm B\kern-.05em{\sc i\kern-.025em b}\kern-.08em
		T\kern-.1667em\lower.7ex\hbox{E}\kern-.125emX}}
	
\begin{document}
\title{Modular Deep Reinforcement Learning for Continuous Motion Planning
	with Temporal Logic}

\date{} 					% Or removing it

\author{Mingyu Cai$^{1}$, Mohammadhosein Hasanbeig$^{2}$, Shaoping Xiao$^{1}$, Alessandro Abate$^{2}$ and
	Zhen Kan$^{3}$}

\renewcommand{\headeright}{}
\renewcommand{\undertitle}{}
\renewcommand{\shorttitle}{}
\blfootnote{$^{1}$Department of Mechanical Engineering, The University of Iowa,
	Iowa City, IA, USA.}
\blfootnote{$^{2}$Department of Computer Science, University of Oxford, Parks
	Road, Oxford, UK.}
\blfootnote{$^{3}$Department of Automation, University of Science and Technology
	of China, Hefei, Anhui, China.}
\blfootnote{$^{4}$Email: mingyu-cai@uiowa.edu, hosein.hasanbeig@icloud.com, shaoping-xiao@uiowa.edu,}

\blfootnote{\hspace{1.1cm}alessandro.abate@cs.ox.ac.uk, zkan@ustc.edu.cn.}

\maketitle

\thispagestyle{fancy}

\begin{abstract}
This paper investigates the motion planning of autonomous dynamical systems
modeled by Markov decision processes (MDP) with unknown transition
probabilities over continuous state and action spaces. Linear
temporal logic (LTL) is used to specify high-level tasks over
infinite horizon, which can be converted into a limit deterministic
generalized B\"uchi automaton (LDGBA) with several accepting sets. The
novelty is to design an embedded product MDP (EP-MDP) between the LDGBA
and the MDP by incorporating a synchronous tracking-frontier function
to record unvisited accepting sets of the automaton, and to facilitate the satisfaction
of the accepting conditions. 
The proposed LDGBA-based reward shaping and discounting schemes for the model-free reinforcement learning (RL) only depend on the EP-MDP states and can overcome the issues of sparse rewards. Rigorous analysis shows that any RL method that optimizes the
expected discounted return is guaranteed to find an optimal policy whose traces maximize the satisfaction probability.
A modular deep deterministic policy gradient (DDPG) is then developed
to generate such policies over continuous state and action spaces. The performance of our framework
is evaluated via an array of OpenAI gym environments.

\global\long\def\Dist{\operatorname{Dist}}%
\global\long\def\Inf{\operatorname{Inf}}%
\global\long\def\Sense{\operatorname{Sense}}%
\global\long\def\Eval{\operatorname{Eval}}%
\global\long\def\Info{\operatorname{Info}}%
\global\long\def\ResetRabin{\operatorname{ResetRabin}}%
\global\long\def\Post{\operatorname{Post}}%
\global\long\def\Acc{\operatorname{Acc}}%
\global\long\def\True{\operatorname{True}}%
\global\long\def\False{\operatorname{False}}%
\end{abstract}

\keywords{Reinforcement Learning \and Neural Networks \and Formal Methods \and Deep Learning}

\section{Introduction}
The goal of motion planning is to generate valid configurations such
that robotic systems can complete pre-specified tasks.
In practice, however, dynamics uncertainties impose great challenges to motion planning.
Markov decision processes (MDP) are often employed to model such uncertainties
as transition probabilities. Growing
research has been devoted to studying the motion planning modelled as an MDP when these transition probabilities are initially unknown. Reinforcement learning (RL) is a sequential decision-making
process that learns optimal action policies for an unknown  MDP via gathering experience samples from the MDP \cite{Sutton2018}. RL has
achieved impressive results over the past few years, but often the learned solution is difficult to understand and examine by humans. Two main challenges existing
in many RL applications are: (i) the design of an appropriate reward shaping
mechanism to ensure correct mission specification, and (ii) the increasing sample complexity
when considering continuous state and action spaces.

Temporal logics offer rich expressivity in describing complex tasks beyond
traditional go-to-goal navigation for robotic systems \cite{Baier2008}. Specifically, motion planning under linear temporal logic (LTL) constraints attracted
growing research attention in the past few years \cite{Guo2015,Vasile2020,Cai2020c}. Under
the assumption of full knowledge of the MDP model, one common objective is to
maximize the probability of accomplishing the given LTL task \cite{Ding2014a,Lacerda2019,Kloetzer2015}.
Once this assumption is relaxed, model-based RL is employed \cite{Sadigh2014,Fu2014,Wang2015} by treating LTL specifications as reward shaping schemes to generate
policies that satisfy LTL tasks by explicitly learning unknown transition probabilities
of the MDP. This means that a model of the MDP is inferred over which an optimal policy is synthesized. However, scalability is a pressing issue for applying model-based approaches due to the
need to store the learned model. On the other hand, by relaxing the need to construct an MDP model, model-free RL is recently adopted where appropriate
reward shaping schemes are proposed \cite{lcrl,Cai2020,Hasanbeig2019a,Hasanbeig2019,Hahn2019,Bozkurt2020,cautiousRL,Oura2020,Cai2020d,Cai2021soft}.
Signal temporal logic is also proposed in \cite{Aksaray2016} and \cite{Venkataraman2020}
where a task-guided reward function
is proposed based on the robustness degrees. Despite the recent progresses, the aforementioned
works can not handle many real-world applications that perform in
high-dimensional continuous state and action spaces. In a pioneer
work \cite{Mnih2015}, Deep Q Network (DQN) addressed
high-dimensional state space and is capable of human-level performance on many Atari video games. However, DQN can only
handle discrete and low-dimensional action spaces. %
\begin{comment}
To deal with continuous state and action spaces, the space of MDP
was first discretized before applying DQN in \cite{Soudjani2015}.
The discretization of state space might not accurately approximate
the dynamics of MDP and the size of discrete states can grow exponentially
with the degrees of freedoms.
\end{comment}
By leveraging actor-critic methods, deep networks,
and the policy gradient methods, deep deterministic policy gradient (DDPG)
was proposed to approximate optimal policies over a continuous action
space to improve the learning performance \cite{Lillicrap2015}.

In this work, we consider motion planning under LTL
task specifications in continuous state and action spaces when the MDP is fully unknown. An unsupervised
one-shot and on-the-fly DDPG-based motion planning framework is developed
to learn the state of an underlying structure without explicitly constructing
the model. The high-level LTL task is converted to a limit deterministic
generalized B\"uchi automaton (LDGBA) \cite{Sickert2016} acting as
task-guided reward shaping scheme and decomposing complex tasks into low-level
and achievable modules. In particular, we consider LTL specifications
over the infinite horizon, whose behaviors can be regarded as a repetitive
pattern \cite{Smith2011,Cai2020b}, which consists of infinite rounds
of visits of the accepting sets of LDGBA.

\textbf{Related works: }When considering deep RL with formal methods,
deterministic finite automaton (DFA) was applied as reward machines
in \cite{deepsynth}, \cite{Icarte2018} and \cite{Camacho2019}, where DQN was employed
to generate optimal policies. In \cite{Li2019}, a truncated linear
temporal logic was considered and its robustness degree was used as
the reward signal to facilitate learning, based on which proximal policy optimization
(PPO) was applied to obtain the optimal policy. However, \cite{deepsynth,Icarte2018,Camacho2019,Li2019,rens2020learning,memarian2020active} only consider tasks over
finite horizons. In contrast, this work extends previous research to
tasks over the infinite horizon, where finite horizon motion planning
can be regarded as a special case of the infinite horizon setting. Along
this line of research, the most relevant works include \cite{Yuan2019,Gao2019,Hasanbeig2019b,Hasanbeig2020,Wang2020,kazemi2020formal,lavaei2020formal}.
In \cite{Gao2019}, LTL constraints were translated to Deterministic
Rabin Automata (DRA), which might fail to find policies that maximize LTL satisfaction probability \cite{Hahn2019}. The work in \cite{Wang2020} proposed a binary
vector to record the visited accepting sets of LDGBA and designed
a varying reward function to improve the satisfaction of LTL specifications.
However, the maximum probability of task satisfaction cannot be guaranteed
and the standard DDPG algorithm cannot distinguish the sub-task module
in \cite{Wang2020}, resulting in an unsatisfactory success rate.
Modular DDPG that jointly optimizes LTL sub-policies was first introduced
in \cite{Yuan2019,Hasanbeig2020}. However, \cite{Yuan2019,Hasanbeig2020} do not explicitly record visited or unvisited accepting
sets of LDGBA in each round of repetitive pattern over the infinite horizon,
which might be essential for synthesising a pure deterministic policy \cite{Oura2020}. 

\textbf{Contributions:} The contributions of this work are multi-fold.
In contrast to most existing works that consider either a discrete state
space or a discrete action space, this paper proposes a modular DDPG architecture integrated with potential functions \cite{Ng1999}. This allows our method to generate optimal policies that efficiently solve LTL motion planning of an unknown MDP over continuous state and
action spaces. The novelty is to construct an embedded
product MDP (EP-MDP) to record unvisited accepting sets of the automaton at each round
of the repeated visiting pattern, by introducing a tracking-frontier function. 
Such a design ensures task completion by encouraging the satisfaction
of LDGBA accepting conditions. To facilitate learning of optimal
policies, the designed reward is enhanced with potential functions that effectively guide the agent toward task satisfaction without adding extra hyper-parameters to the algorithm. Unlike \cite{Oura2020}, rigorous analysis
shows that the maximum probability of task satisfaction can be guaranteed.
Compared to approaches based on limit deterministic B\"uchi automata (LDBA), e.g., \cite{Hahn2019,Bozkurt2020}, LDGBA
has several accepting sets while LDBA only has one accepting set which can result in sparse rewards during training. Consequently,
our approach can maintain a higher density of training rewards by
assigning positive rewards to the accepting states. Moreover, the proposed method does not require the construction of full EP-MDP, and its states can be obtained on-the-fly
to generate optimal policies.

In summary, our approach can find the
optimal policy in continuous state and action spaces to satisfy LTL
specifications over infinite horizon with approximate maximum
probability. 

\section{Preliminaries}

\subsection{Continuous Labeled MDP and Reinforcement Learning\label{subsec:Labeled-MDP}}

A continuous labeled MDP is a tuple $\mathcal{M}=\left(S,A,p_{S},\Pi,L,\varLambda\right)$,
where $S\subseteq\mathbb{R}^{n}$ is a continuous state space, $A\subseteq\mathbb{R}^{m}$
is a continuous action space, $\Pi$ is a set of atomic propositions,
$L:S\shortrightarrow2^{\Pi}$ is a labeling function, and $p_{S}:\mathfrak{B}\left(\mathbb{R}^{n}\right)\times A\times S\shortrightarrow\left[0,1\right]$
is a Borel-measurable conditional transition kernel such that $p_{S}\left(\left.\cdot\right|s,a\right)$
is a probability measure of $s\in S$ and $a\in A$ over the Borel
space $\left(\mathbb{R}^{n},\mathfrak{B}\left(\mathbb{R}^{n}\right)\right)$,
where $\mathfrak{B}\left(\mathbb{R}^{n}\right)$ is the set of all
Borel sets on $\mathbb{R}^{n}$. The transition probability $p_{S}$
captures the motion uncertainties of the agent. It is assumed that
$p_{S}$ is not known \textit{a priori}, and the agent can only observe
its current state and the associated label of the current state. 

A deterministic policy $\boldsymbol{\xi}$ of continuous MDP is a function $\boldsymbol{\xi}:S\rightarrow {A}$
that maps each state to an action over the action space $A$.
The MDP $\mathcal{M}$ evolves by taking an action $a_{i}$ based on the rule $\xi_{i}$ at each
stage $i$%
, and thus the control policy $\boldsymbol{\xi}=\xi_{0}\xi_{1}\ldots$
is a sequence of rules, which yields a path $\boldsymbol{s}=s_{0}s_{1}s_{2}\ldots$
over $\mathcal{M}$ with the transition $s_{i}\overset{a_{i}}{\rightarrow}s_{i+1}$
that exists, i.e., $s_{i+1}$ belongs to 
 the smallest Borel set $B$ such
that $p_{S}\left(\left.B\right|s_{i},a_{i}\right)=1$. 
The control policy $\boldsymbol{\xi}$ is memoryless if each $\xi_{i}$
only depends on its current state, and $\boldsymbol{\xi}$ is a finite
memory policy if $\xi_{i}$ depends on its past history.

Let $\varLambda:S\times A\times S\shortrightarrow\mathbb{R}$ denote
a reward function. Given a discounting function $\gamma:S\times A\times S\shortrightarrow\mathbb{R}$,
the expected discounted return under policy \textbf{$\boldsymbol{\xi}$} starting
from $s\in S$ is defined as $$U^{\boldsymbol{\xi}}\left(s\right)=\mathbb{E}^{\boldsymbol{\xi}}\left[\stackrel[i=0]{\infty}{\sum}\gamma^{i}\left(s_{i},a_{i},s_{i+1}\right)\cdot\varLambda\left(s_{i},a_{i},s_{i+1}\right)\left|s_{0}=s\right.\right].$$
An optimal policy $\boldsymbol{\xi}^{*}$ maximizes the expected
return for each state $s\in S$, i.e., $$\boldsymbol{\xi}^{*}=\underset{\boldsymbol{\xi}}{\arg\max}U^{\boldsymbol{\xi}}\left(s\right).$$
The function $U^{\boldsymbol{\xi}}\left(s\right)$ is often referred
to as the value function under policy \textbf{$\boldsymbol{\xi}$}.
If the MDP is not fully known, but the state and action spaces are countably finite, tabular approaches are usually
employed \cite{Watkins1992}. However, traditional tabular RL methods
are not applicable to MDPs with continuous state and action spaces.
In this work, we propose a policy gradient method that relies on deep neural
networks to parameterize the policy model. %
\begin{comment}
to update the Q-value from $x$ to $x'$ according to 
\begin{equation}
\begin{aligned}Q\left(s,a\right)\text{\ensuremath{\leftarrow}} & \left(1-\alpha\right)Q\left(s,a\right)\\
& +\alpha\left[\varLambda\left(s,a,s'\right)+\gamma\left(s,a,s'\right)\cdot\underset{a'\in A}{\max}Q\left(s',a'\right)\right],
\end{aligned}
\label{eq:Q}
\end{equation}
where $Q\left(s,a\right)$ is the Q-value of the state-action pair
$\left(s,a\right)$, $0<\alpha\leq1$ is the learning rate. With the
standard learning rate and discount factor, Q-value will converge
to the optimal $Q^{*}$ Therefore, the optimal expected utility and
policy can be obtained as $U_{\boldsymbol{\xi}}^{*}\left(s\right)=\underset{a\in A}{\max}Q^{*}\left(s,a\right)$
and $\boldsymbol{\boldsymbol{\xi}}^{*}\left(x\right)=\underset{a\in A}{\arg\max}Q^{*}\left(s,a\right)$. 
\end{comment}

\subsection{LTL and Limit-Deterministic Generalized B\"uchi Automaton}

Linear temporal logic (LTL) is a formal language that is widely used to describe
complex mission tasks \cite{Baier2008}. %
\begin{comment}
An LTL formula is built on a set of atomic propositions $\Pi$, which
are properties of system states that can be either true or false,
and standard Boolean operators such as $\land$ (conjunction), $\lor$
(disjunction), $\lnot$ (negation), and temporal operators such as
$\diamondsuit$ (eventually), $\varbigcirc$ (next), $\boxempty$
(always), and $\mathcal{U}$ (until). 
\end{comment}
The semantics of an LTL formula are interpreted over a word, which is an
infinite sequence $o=o_{0}o_{1}\ldots$ where $o_{i}\in2^{\Pi}$ for
all $i\geq0$, and $2^{\Pi}$ represents the power set of $\Pi$.
Denote by $o\models\phi$ if the word $o$ satisfies the LTL formula
$\phi$. %
\begin{comment}
A word satisfies $\phi$ if $\phi$ is true at the first position
of the word; $\boxempty\phi$ means $\phi$ is true for all future
moments; $\diamondsuit\phi$ means $\phi$ is true at some future
moments; $\varbigcirc\phi$ means $\phi$ is true at the next moment;
and $\phi_{1}\mathcal{U}\phi_{2}$ means $\phi_{1}$ is true until
$\phi_{2}$ becomes true. More expressivity can be achieved by combining
temporal and Boolean operators. Detailed descriptions of the syntax
and semantics of LTL can be found in \cite{Baier2008}.
\end{comment}
{}Given an LTL specification, its satisfaction can be evaluated by
an LDGBA \cite{Sickert2016}. An LDGBA is a sub-class of generalized B\"uchi automata (GBA) that can express the set of words of an LTL formula. In this work, we restrict our attention to LTL formulas that exclude the \emph{next} temporal operator, which is not meaningful for continuous time execution~\cite{kloetzer2008fully}.
\begin{defn}
	\label{def:GBA} A GBA is a tuple $\mathcal{A}=\left(Q,\Sigma,\delta,q_{0},F\right)$,
	where $Q$ is a finite set of states, $\Sigma=2^{\Pi}$ is a finite
	alphabet, $\delta\colon Q\times\Sigma\shortrightarrow2^{Q}$ is the
	transition function, $q_{0}\in Q$ is the initial state, and $F=\left\{ F_{1},F_{2},\ldots,F_{f}\right\} $
	is the set of accepting sets where $F_{i}\subseteq Q$, $\forall i\in\left\{ 1,\ldots f\right\} $. 
\end{defn}
Denote by $\boldsymbol{q}=q_{0}q_{1}\ldots$ a run of a GBA, where
$q_{i}\in Q$, $i=0,1,\ldots$. The run $\boldsymbol{q}$ is accepted
by the GBA, if it satisfies the generalized B\"uchi acceptance condition,
i.e., $\inf\left(\boldsymbol{q}\right)\cap F_{i}\neq\emptyset$, $\forall i\in\left\{ 1,\ldots f\right\} $,
where $\inf\left(\boldsymbol{q}\right)$ denotes the infinite part
of $\boldsymbol{q}$. 
\begin{defn}
	\label{def:LDGBA} A GBA is an LDGBA if the transition function $\delta$
	is extended to $Q\times\left(\Sigma\cup\left\{ \epsilon\right\} \right)\shortrightarrow2^{Q}$,
	and the state set $Q$ is partitioned into a deterministic set $Q_{D}$
	and a non-deterministic set $Q_{N}$, i.e., $Q_{D}\cup Q_{N}=Q$ and
	$Q_{D}\cap Q_{N}=\emptyset$, where 
	\begin{itemize}
		\item the state transitions in $Q_{D}$ are total and restricted within
		it, i.e., $\bigl|\delta\left(q,\alpha\right)\bigr|=1$ and $\delta\left(q,\alpha\right)\subseteq Q_{D}$
		for every state $q\in Q_{D}$ and $\alpha\in\Sigma$, 
		\item An $\epsilon$-transition is not allowed in the deterministic set,
		i.e., for any $q\in Q_{D}$, $\delta\left(q,\epsilon\right)=\emptyset$,
		and 
		\item the accepting states are only in the deterministic set, i.e., $F_{i}\subseteq Q_{D}$
		for every $F_{i}\in F$. 
	\end{itemize}
\end{defn}
In Definition \ref{def:LDGBA}, $\epsilon$-transitions are only for state transitions from $Q_{N}$ to $Q_{D}$, without reading an alphabet. We used OWL \cite{Kretinsky2018}, an easily-accessible tool, to generate LDGBAs in this work. \textcolor{black}{}%
\begin{comment}
\textcolor{black}{As discussed in \cite{Vardi1985}, the probabilistic
verification of automaton does not need to be fully deterministic.
That is, the automata-theoretic approach still works if restricted
forms of non-determinism are allowed. Therefore, LDBA has been applied
for the qualitative and quantitative analysis of M}DPs \cite{Hahn2013,Sickert2016}.
\end{comment}
\textcolor{blue}{{} }In the following analysis, we use $\mathcal{A}_{\phi}$
to denote the LDBA corresponding to an LTL formula $\phi$. %
\begin{comment}
A non-accepting set $Q_{S}$ of LDGBA is a subset of $Q$ such that
the accepting condition can never be satisfied. Thus, any policy that
derives the system to $Q_{S}$ cannot to satisfy the LTL specification.
\end{comment}

\begin{defn}
\label{def:Non-accepting Sink Component} A non-accepting sink component
$Q_{sink}\subseteq Q$ of an LDGBA is a strongly connected directed
graph induced by a set of states, such that the accepting condition
can not be satisfied if starting from any state in $Q_{sink}$.
\end{defn}

\section{Problem Statement}

% \begin{figure}
% 	\centering{}\includegraphics[scale=0.35]{running example.jpg}\caption{\label{fig:example} Example of Mars exploration.}
% \end{figure}

\begin{figure}[!t]\centering
	\subfloat[]{{\includegraphics[width=0.40\columnwidth]{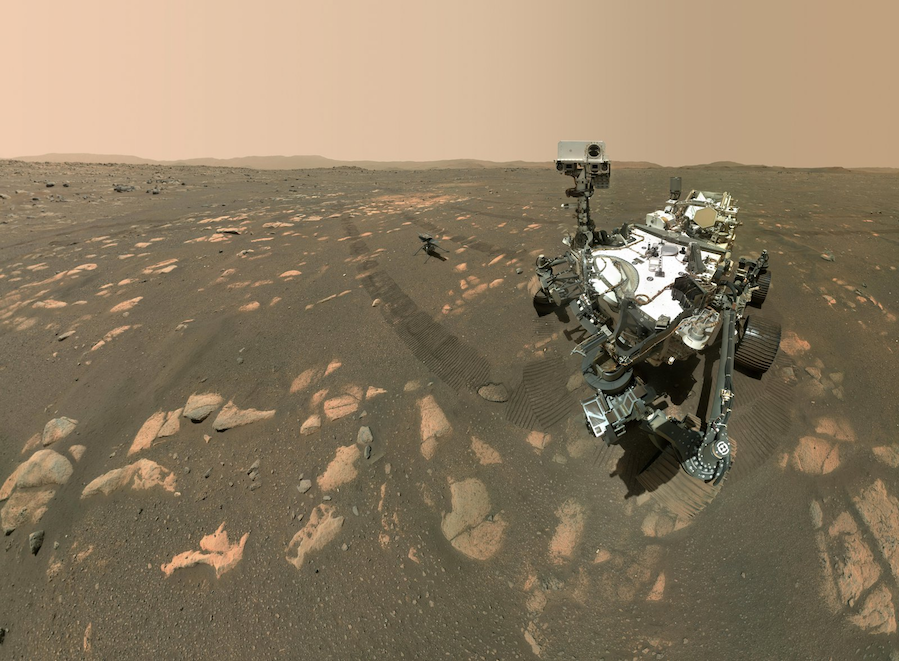} }}%
	\subfloat[]{{\includegraphics[width=0.44\columnwidth]{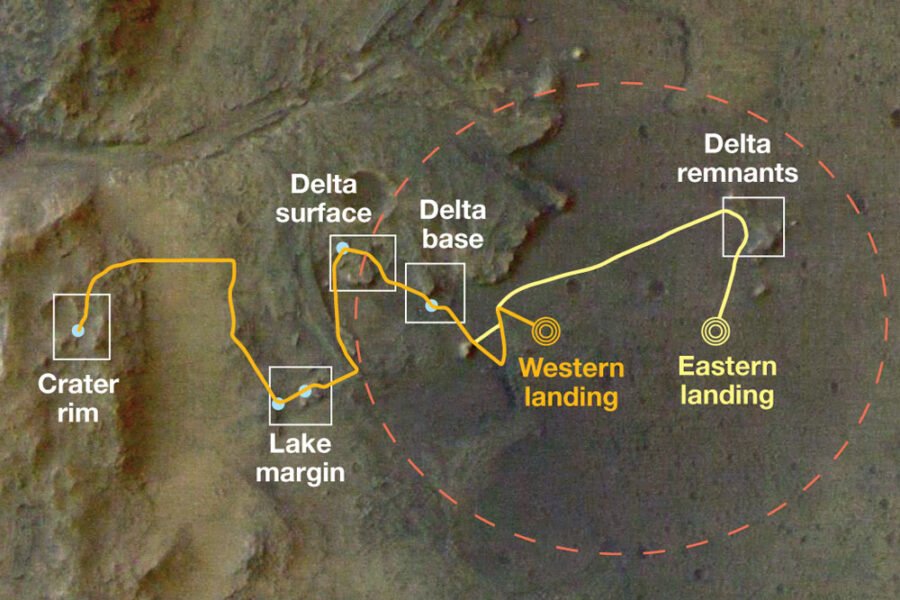} }}%
	\caption{Example of Mars exploration (courtesy of NASA)}
	\label{fig:example}
\end{figure}

%\begin{figure}
%	\centering{}\includegraphics[scale=0.48]{}
%	\caption{\label{fig:example} Example of Mars exploration (courtesy of %NASA)}
%\end{figure}

Consider a robot that performs a mission described by an LTL formula
$\phi$. For instance, as shown in Fig. \ref{fig:example}, the helicopter provides the map consisting of areas of interest, based on which the ground Mars rover is tasked to visit all regions marked with rectangles. Due to the complex terrain of Mars surface, there exist motion uncertainties during the rover movement. The interaction of the robot with the environment
is modeled by a continuous MDP $\mathcal{M}$, which can be used to model decision-making problems for general robotic systems. Under a policy $\boldsymbol{\xi}=\xi_{0}\xi_{1}\ldots$,
the induced path over $\mathcal{M}$ is $\boldsymbol{s}_{\infty}^{\boldsymbol{\xi}}=s_{0}\ldots s_{i}s_{i+1}\ldots$.
Let $L\left(\boldsymbol{s}_{\infty}^{\boldsymbol{\xi}}\right)=l_{0}l_{1}\ldots$
be the sequence of labels associated with $\boldsymbol{s}_{\infty}^{\boldsymbol{\xi}}$
such that $l_{i}\in L\left(s_{i}\right)$. Denote the satisfaction relation by $L\left(\boldsymbol{s}_{\infty}^{\boldsymbol{\xi}}\right)\models\phi$
if the induced %$\boldsymbol{s}_{\infty}^{\boldsymbol{\xi}}$ 
trace satisfies
$\phi$. The probabilistic satisfaction under the policy $\xi$ from
an initial state $s_{0}$ can then be defined as
\begin{equation}
{\Pr{}_{M}^{\boldsymbol{\xi}}\left(\phi\right)=\Pr{}_{M}^{\boldsymbol{\xi}}\left(L\left(\boldsymbol{s}_{\infty}^{\boldsymbol{\xi}}\right)\models\phi\big|\boldsymbol{s}_{\infty}^{\boldsymbol{\xi}}\in\boldsymbol{S}_{\infty}^{\boldsymbol{\boldsymbol{\xi}}}\right),}\label{eq:probabilistic-satisfaction}
\end{equation}
where $\boldsymbol{S}_{\infty}^{\boldsymbol{\xi}}$ is a set of admissible
paths from the initial state under the policy ${\boldsymbol{\xi}}$. 

\textbf{Assumption1.} It is assumed that there exists at least one policy whose induced traces satisfy the task $\phi$ with non-zero probability.

Assumption 1 is mild and widely employed in the literature (cf. \cite{Sadigh2014,Hahn2019,Bozkurt2020}), which indicates $\phi$ can be completed. Based on Assumption 1, this work considers the following problem. 

\begin{problem}
	\label{Prob1}Given an LTL-specified task $\phi$ and a continuous-state continuous-action
	labeled MDP $\mathcal{M}$ with unknown transition probabilities, the goal is to learn a policy $\boldsymbol{\xi}^{*}$
	that maximizes the satisfaction probability in the limit, i.e., $\boldsymbol{\xi}^{*}=\underset{\boldsymbol{\xi}}{\arg\max}\Pr{}_{M}^{\boldsymbol{\xi}}\left(\phi\right)$. 
\end{problem}

\section{Automaton Design\label{sec:Automaton-Analysis}}

To address Problem \ref{Prob1}, Section \ref{subsec:PMDP} presents
the construction of the EP-MDP
between an MDP and an LDGBA. The advantages of incorporating EP-MDP
are discussed in Section \ref{subsec:Property_relax}.

\subsection{Embedded Product MDP\label{subsec:PMDP}}

Given an LDGBA $\mathcal{A}=\left(Q,\Sigma,\delta,q_{0},F\right)$,
a tracking-frontier set $T$ is designed to keep track of unvisited
accepting sets. Particularly, $T$ is initialized as $T_{0}=F$ and $\mathcal{B}$ is a Boolean variable, which is
then updated according to the following rule:
\begin{equation}
{\left(T,\mathcal{B}\right)}=f_{V}\left(q,T\right)=\left\{ \begin{array}{cc}
\left(T\setminus F_{j},\operatorname{False}\right), & \text{if }q\in F_{j}\text{ and }F_{j}\in T,\\
\left(F\setminus F_{j},\operatorname{True}\right) & \text{if }\ensuremath{q\in F_{j}\text{ and }T=\emptyset},\\
\left(T,\operatorname{False}\right), & \text{otherwise. }
\end{array}\right.\label{eq:Trk-fontier}
\end{equation}

Once a state $q \in F_{j}$ is visited, $F_{j}$ will be removed
from $T$ by rendering function $f_{V}\left(q,T\right)$.
If $T$ becomes empty before visiting set $F_{j}$, it will
be reset as $F\setminus F_{j}$. Since the acceptance condition of
LDGBA requires to infinitely visit all accepting sets, we call it
one round if all accepting sets have been visited (i.e., a round ends
if $T$ becomes empty). The second output of $f_{V}\left(q,T\right)$
is to indicate whether all accepting sets have been visited in current
round, which is applied in Section \ref{subsec:RL} to design the
potential function. Based on (\ref{eq:Trk-fontier}),
the EP-MDP is constructed as follows.

\begin{defn}
\label{def:P-MDP} Given an MDP $\mathcal{M}$ and an LDGBA $\mathcal{A}_{\phi}$,
the EP-MDP is defined as $\mathcal{P}=\mathcal{M}\times\mathcal{A}_{\phi}={\color{red}{\color{black}\left(X,U^{\mathcal{P}},p^{\mathcal{P}},x_{0},F^{\mathcal{P}},T,f_{V},\mathcal{B}\right)}}$,
where $X=S\times Q\times2^{F}$ is the set of product states and $2^{F}$ denotes all subsets of $F$ for $\mathcal{A}_{\phi}$, i.e.,
$x=\left(s,q,T\right)\in X$ ; $U^{\mathcal{P}}=A\cup\left\{ \epsilon\right\} $
is the set of actions, where the $\epsilon$-actions are only allowed
for transitions from $Q_{N}$ to $Q_{D}$; $x_{0}=\left(s_{0},q_{0},T_{0}\right)$
is the initial state; $F^{\mathcal{P}}=\left\{ F_{1}^{\mathcal{P}},F_{2}^{\mathcal{P}}\ldots F_{f}^{\mathcal{P}}\right\} $
where $F_{j}^{\mathcal{P}}=\left\{ \left(s,q, T\right)\in X\bigl|q\in F_{j}\land F_{j} \subseteq T\right\} $,
$j=1,\ldots f$, is a set of accepting states; $p^{\mathcal{P}}$
is the transition kernel for any transition $p^{\mathcal{P}}\left(x,u^{\mathcal{P}},x'\right)$
with $x=\left(s,q,T\right)$ and $x'=\left(s',q',T\right)$ such that
: (1) $p^{\mathcal{P}}\left(x,u^{\mathcal{P}},x'\right)=p_{S}\left(\left.s'\right|s,a\right)$
if $s^{\prime}\backsim p_{S}\left(\left.\cdot\right|s,a\right)$,
${\delta}\left(q,L\left(s\right)\right)=q^{\prime}$ where
$u^{\mathcal{P}}=a\in A$ \; (2) $p^{\mathcal{P}}\left(x,u^{\mathcal{P}},x'\right)=1$
if $\ensuremath{u^{\mathcal{P}}\in\left\{ \epsilon\right\} }$, $q'\in{\delta}\left(q,\epsilon\right)$
and $s'=s$; and (3) $p^{\mathcal{P}}\left(x,u^{\mathcal{P}},x'\right)=0$
otherwise. After completing each transition $q'={\delta}\left(q,\alpha\right)$ based on $p^{\mathcal{P}}$,
$T$ is synchronously updated as ${\left(T,\mathcal{B}\right)=f_{V}\left(q',T\right)}$
by (\ref{eq:Trk-fontier}).
\end{defn}

The state-space is embedded with the tracking-frontier set $T$ that can be practically represented via one-hot encoding based on the indices of the accepting set. Compared to the standard construction of product MDPs, any state of the accepting set in EP-MDP, e.g., $\left(s,q, T\right)\in F_{j}^{\mathcal{P}}$, requires the automaton state to satisfy $q\in F_{j}\land F_{j} \subseteq T$, and embedded tracking frontier set $T$ is updated based on (\ref{eq:Trk-fontier}) after each transition. Consequently, to satisfy the accepting condition, the agent is encouraged to visit all accepting sets.

In this work, the EP-MDP is only used for theoretical analyses and it is not
constructed in practice. The EP-MDP captures the intersections
between all feasible paths over $\mathcal{M}$ and all words accepted
to $\mathcal{{A}}_{\phi}$, facilitating the identification
of admissible agent actions that satisfy task $\phi$. 

\begin{algorithm}
\caption{\label{Alg:EP-MDP} generating a random run of
EP-MDP}

\scriptsize

\singlespacing

\begin{algorithmic}[1]

\Procedure {Input: } {$\mathcal{M}$, $\mathcal{\mathcal{A}_{\phi}}$,
$f_{V},T$ and length $L$}

{Output: } { A valid run $\boldsymbol{x}_{\mathcal{P}}$ with length
$L$ in $\mathcal{P}$}

\State set $x_{0}=\left(s_{0},q_{0},T_{0}\right)$, $s_{cur}=s_{0}$
and $\boldsymbol{x}_{\mathcal{P}}=\left(x_{0}\right)$

\State set $T=F$ and $count=1$

\While { $count\leq L$ }

\State set $x_{suc}=\emptyset$ 

\State obtain $s_{suc}$ from $p_{S}\left(\left.\cdot\right|s_{cur},a\right)$
by random sampling action $a$

\For { each successor $q_{next}$ of $q_{cur}$ in $\mathcal{\mathcal{A}_{\phi}}$ }

\If { ${\delta}\left(q_{cur},L\left(s_{cur}\right)\right)=q_{next}$
or $q_{next}\in{\delta}\left(q_{cur},\epsilon\right)$}

\State $q_{suc}\shortleftarrow q_{next}$

\State $x_{suc}\shortleftarrow\left(s_{suc},q_{suc},T\right)$

\State check if $x_{suc}$ is an accepting state

\State$T=f_{V}\left(q_{suc},T\right)$

\State \textbf{break}

\EndIf

\EndFor

\If {no successor $q_{suc}$ found } 

\State fail to generate run with length $L$

\EndIf

\State add state $x_{suc}$ to $\boldsymbol{x}_{\mathcal{P}}$ 

\State $q_{cur}\shortleftarrow q_{suc}$ and $s_{cur}\shortleftarrow s_{suc}$ 

\State $count++$;

\EndWhile

\EndProcedure

\end{algorithmic}
\end{algorithm}

Algorithm \ref{Alg:EP-MDP} shows the procedure of obtaining a valid
run $\boldsymbol{x}_{\mathcal{P}}$ on-the-fly within EP-MDP
by randomly selecting an action. After each transition, the accepting state is determined based on the Definition \ref{def:P-MDP} and the $T$ is synchronously updated (line 7-15). Such property
is the innovation of EP-MDP that encourages all accepting sets to be
visited in each round. Since the action is selected randomly, there is a non-zero probability that $\boldsymbol{x}_{\mathcal{P}}$ violates the LTL task (line 14-16).

Let $\boldsymbol{\pi}$ denote a policy over $\mathcal{P}$ and denote
by $\boldsymbol{x}_{\infty}^{\boldsymbol{\pi}}=x_{0}\ldots x_{i}x_{i+1}\ldots$
the infinite path generated by $\boldsymbol{\pi}$. A path $\boldsymbol{x}_{\infty}^{\boldsymbol{\pi}}$
is accepted if $\inf\left(\boldsymbol{x}_{\infty}^{\boldsymbol{\pi}}\right)\cap F_{i}^{\mathcal{P}}\neq\emptyset$
, $\forall i\in\left\{ 1,\ldots f\right\} $. We denote $\Pr^{\mathbf{\boldsymbol{\pi}}}\left[x\models\Acc_{p}\right]$
as the probability of satisfying the accepting condition of $\mathcal{P}$
under policy $\boldsymbol{\pi}$, and denote $\Pr_{max}\left[x\models\Acc_{p}\right]=\underset{\boldsymbol{\pi}}{\max}\Pr_{M}^{\boldsymbol{\pi}}\left(\Acc_{p}\right)$ as the maximum probability of satisfying the accepting condition of $\mathcal{P}$. Let $\boldsymbol{\boldsymbol{\pi}}^{*}$ denote an optimal policy that maximizes the expected discounted return over $\mathcal{P}$, i.e., $\boldsymbol{\pi}^{*}=\underset{\boldsymbol{\pi}}{\arg\max}U^{\boldsymbol{\pi}}\left(s\right)$. Note that the memory-less policy $\boldsymbol{\boldsymbol{\pi}}^{*}$ over $\mathcal{P}$ yields a finite-memory policy $\boldsymbol{\xi}^{*}$ over $\mathcal{M}$, allowing us to reformulate Problem \ref{Prob1} into the following:
 
\begin{problem}
\label{Prob:2} Given a user-specified LTL task $\phi$ and a general and unknown continuous-state continuous-action
labeled MDP, the goal is to
asymptotically find a policy $\boldsymbol{\boldsymbol{\pi}}^{*}$ satisfying the
acceptance condition of $\mathcal{\mathcal{P}}$ with maximum probability,
i.e., $\Pr^{\boldsymbol{\pi}^{*}}\left[x\models\Acc_{p}\right]=\Pr_{max}\left[x\models\Acc_{p}\right]$. 
\end{problem}

\subsection{Properties of EP-MDP\label{subsec:Property_relax}}

Consider a sub-EP-MDP $\mathcal{P}'_{\left(X',U'\right)}$, where
$X'\subseteq X$ and $U'\subseteq U^{\mathcal{P}}$. If $\mathcal{P}'_{\left(X',U'\right)}$
is a maximum end component (MEC) of $\mathcal{P}$ and $X'\cap F_{i}^{\mathcal{P}}\neq\emptyset$,
$\forall i\in\left\{ 1,\ldots f\right\} $, then $\mathcal{P}'_{\left(X',U'\right)}$
is called an accepting maximum end component (AMEC) of $\mathcal{P}$.
Once a path enters an AMEC, the subsequent path will stay within it
by taking restricted actions from $U'$. There exist policies such
that any state $x\in X'$ can be visited infinitely often. As a result,
satisfying task $\phi$ is equivalent to reaching an AMEC. Moreover,
a MEC that does not intersect with any accepting sets is called a rejecting
accepting component (RMEC) and a MEC intersecting with only partial accepting sets
is called a neutral maximum end component (NMEC) \cite{Baier2008}.

\begin{defn}\label{def:induced_markov_chain}
Let $MC_{\mathcal{\mathcal{P}}}^{\boldsymbol{\pi}}$ denote the Markov
chain induced by a policy $\boldsymbol{\pi}$ on $\mathcal{\mathcal{\mathcal{P}}}$,
whose states can be represented by a disjoint union of a transient
class $\ensuremath{\mathcal{T}_{\boldsymbol{\pi}}}$ and $n_R$ closed
irreducible recurrent classes $\ensuremath{\mathcal{R}_{\boldsymbol{\pi}}^{j}}$,
$j\in\left\{ 1,\ldots,n_{R}\right\} $ \cite{Durrett1999}.
\end{defn}
\begin{lem}
\label{lemma:accepting set}Given an EP-MDP $\mathcal{\mathcal{\mathcal{P}}=M}\times\mathcal{{A}}_{\phi}$
, the recurrent class $R_{\boldsymbol{\pi}}^{j}$ of $MC_{\mathcal{\mathcal{P}}}^{\boldsymbol{\pi}}$,
$\forall j\in\left\{ 1,\ldots,n_R\right\} $, induced by $\pi$ satisfies
one of the following conditions: $\ensuremath{R_{\boldsymbol{\pi}}^{j}}\cap F_{i}^{\mathcal{\mathcal{P}}}\neq\emptyset,\forall i\in\left\{ 1,\ldots f\right\} $,
or $R_{\boldsymbol{\pi}}^{j}\cap F_{i}^{\mathcal{\mathcal{P}}}=\emptyset,\forall i\in\left\{ 1,\ldots f\right\} $. 
\end{lem}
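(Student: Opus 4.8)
The plan is to turn the tracking-frontier mechanism of (\ref{eq:Trk-fontier}) into a counting argument over ``rounds'': inside a recurrent class, visiting a single accepting set infinitely often will force a whole round of (\ref{eq:Trk-fontier}) to be replayed, and a completed round necessarily pulls a state of every $F_{i}^{\mathcal{P}}$ into that class. Concretely, I would first invoke Definition \ref{def:induced_markov_chain}: the state space of $MC_{\mathcal{P}}^{\boldsymbol{\pi}}$ is the disjoint union of the transient class $\mathcal{T}_{\boldsymbol{\pi}}$ and the recurrent classes $R_{\boldsymbol{\pi}}^{j}$, each of which is closed and irreducible, so a trajectory that enters $R:=R_{\boldsymbol{\pi}}^{j}$ remains in $R$ and almost surely visits every state of $R$ — in particular it returns to any fixed $x_{0}\in R$ — infinitely often. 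If $R\cap F_{i}^{\mathcal{P}}=\emptyset$ for all $i$ we are already in the second alternative; otherwise fix $x_{0}=(s_{0},q_{0},T_{0})\in R\cap F_{i_{0}}^{\mathcal{P}}$ for some $i_{0}$, and the task becomes to prove $R\cap F_{i}^{\mathcal{P}}\neq\emptyset$ for every $i\in\{1,\dots,f\}$.

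The core of the argument is a description of how the embedded tracker $T$ evolves along a trajectory confined to $R$. By (\ref{eq:Trk-fontier}) and Definition \ref{def:P-MDP}, between two resets $T$ is non-increasing for set inclusion; an index $F_{m}$ leaves $T$ exactly when a state of $F_{m}^{\mathcal{P}}$ is visited; and $T$ can be enlarged only by a reset, which fires only from $T=\emptyset$ and then reinstates every accepting index except the one read at that instant. Since $x_{0}\in F_{i_{0}}^{\mathcal{P}}$ means $F_{i_{0}}$ is still owed in the tracker component of $x_{0}$ while the transition out of $x_{0}$ removes $F_{i_{0}}$ from $T$, the chain cannot come back to $x_{0}$ without a reset taking place in between; as $x_{0}$ is revisited infinitely often, $T$ is therefore reset — equivalently, a round is completed — infinitely often. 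Now I would focus on one such round, i.e.\ the stretch between two consecutive resets, and let $F_{j^{*}}$ be the accepting set read at the reset that opens it: $T$ is then set to $F\setminus F_{j^{*}}$ and must be driven down to $\emptyset$ before the next reset can fire, so a state of $F_{m}^{\mathcal{P}}$ is visited for every $m\neq j^{*}$, while the reset state itself — whose automaton component lies in $F_{j^{*}}$ and at which the flag $\mathcal{B}$ signals the start of a new round — accounts for $F_{j^{*}}^{\mathcal{P}}$. All of these states lie on a path that never leaves $R$, hence belong to $R$, yielding $R\cap F_{i}^{\mathcal{P}}\neq\emptyset$ for every $i$, i.e.\ the first alternative.

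The step I expect to be the main obstacle is the bookkeeping of $T$ across a reset: one must certify that the accepting set $F_{j^{*}}$ triggering the reset is genuinely represented by a state of $F_{j^{*}}^{\mathcal{P}}$ inside $R$, rather than slipping through the $T=\emptyset$ branch of (\ref{eq:Trk-fontier}) as though it were ``already visited''. This is exactly the case the synchronous update in Definition \ref{def:P-MDP} and the Boolean flag $\mathcal{B}$ are designed to handle, and it is where a careful reading of the middle line of (\ref{eq:Trk-fontier}) is needed. By contrast, the transient/recurrent decomposition and the ``visited infinitely often'' assertions are standard countable-state Markov chain facts and require no additional work; the statement is the natural EP-MDP counterpart of the usual dichotomy for recurrent classes of product automata under a (generalized) B\"uchi condition.
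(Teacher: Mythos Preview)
Your approach is essentially the paper's, recast as a direct argument rather than a contradiction: the paper assumes a strict nonempty subset of the $F_i^{\mathcal{P}}$ meets $R_{\boldsymbol{\pi}}^{j}$, fixes a recurrent accepting state $x_k$, and observes that the tracking-frontier rule (\ref{eq:Trk-fontier}) prevents $T$ from ever being reset --- hence $F_k$ from being reinstated and $x_k$ from being revisited --- unless every accepting set is hit, which contradicts $\sum_{n} p^{n}(x_k,x_k)=\infty$. Your round-counting simply unfolds the same mechanism forward in time (infinitely many returns to $x_0$ force infinitely many resets, and each completed round sweeps through the $F_i^{\mathcal{P}}$), and the bookkeeping obstacle you flag at the reset index $j^{*}$ is precisely the step the paper compresses into the single clause ``the tracking set $T$ will not be reset until all accepting sets have been visited.''
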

\begin{proof}
The strategy of the following proof is based on contradiction. Assume
there exists a policy such that $\ensuremath{\ensuremath{R_{\boldsymbol{\pi}}^{j}}}\cap F_{k}^{\mathcal{\mathcal{P}}}\neq\emptyset$,
$\forall k\in K$, where $K$ is a subset of $2^{\left\{ 1,\ldots f\right\} }\setminus\left\{ \left\{ 1,\ldots f\right\} ,\emptyset\right\} $.
As discussed in \cite{Durrett1999}, for each state in recurrent class,
it holds that $\stackrel[n=0]{\infty}{\sum}p^{n}\left(x,x\right)=\infty$,
where $x\in\ensuremath{\ensuremath{R_{\pi}^{j}}}\cap F_{k}^{\mathcal{\mathcal{P}}}$
and $p^{n}\left(x,x\right)$ denotes the probability of returning
from a transient state $x$ to itself in $n$ steps. This means that
each state in the recurrent class occurs infinitely often. However,
based on the embedded tracking-frontier function of EP-MDP, once $x_{k}$ is visited, the corresponding
$F_{k}$ of $F_{k}^{\mathcal{\mathcal{P}}}$ is removed from $T$,
and the tracking set $T$ will not be reset until all accepting sets
have been visited. As a result, neither $q_{k}\in F_{k}$ nor ${x_{k}=\left(s,q_{k},T\right)}\in\ensuremath{\ensuremath{R_{\boldsymbol{\pi}}^{j}}}\cap F_{k}^{\mathcal{\mathcal{P}}}$
with $s\in S$ will occur infinitely, which contradicts the property
$\stackrel[n=0]{\infty}{\sum}p^{n}\left(x_{k},x_{k}\right)=\infty$. 
\end{proof}
Lemma \ref{lemma:accepting set} indicates that, for any policy, all
accepting sets will be placed either in the transient class or in
the recurrent classes.

\section{Learning-based Control Synthesis\label{sec:Solution}}

In the following, we discuss a base reward design and present rigorous analysis to show how such a design can guide the RL-agent over the EP-MDP to find an optimal policy whose traces satisfies the LTL task with maximum probability. In order to improve the reward density, Section \ref{subsec:reward_shaping} proposes a reward shaping process via integrating a potential function under which the optimal policies remain invariant. Finally, Section \ref{subsec:RL} shows how to apply the shaped reward function with DDPG to construct a modular DDPG architecture and effectively solve Problem \ref{Prob:2}.

\subsection{Base Reward\label{subsec:RL-reward}}

Let $F_{U}^{\mathcal{\mathcal{P}}}$ denote the union of accepting states, i.e.,
$F_{U}^{\mathcal{\mathcal{P}}}=\left\{ x\in X \bigl| x\in F_{i}^{\mathcal{\mathcal{P}}},\forall i\in\left\{ 1,\ldots f\right\}\right\} $. For each transition $\left(x,u^{\mathcal{P}},x'\right)$ in the EP-MDP, the reward and discounting function only depend on current state $x$, i.e., $R\left(x,u^{\mathcal{P}},x'\right)=R\left(x\right)$ and $\gamma\left(x,u^{\mathcal{P}},x'\right)=\gamma\left(x\right)$.

Inspired by \cite{Bozkurt2020}, we propose a reward function as:
\begin{equation}
R\left(x\right)=\left\{ \begin{array}{cc}
1-r_{F}, & \text{if }x\in F_{U}^{\mathcal{\mathcal{P}}},\\
0, & \text{otherwise,}
\end{array}\right.\label{eq:reward_function}
\end{equation}

and a discounting function as 
\begin{equation}
\gamma\left(x\right)=\left\{ \begin{array}{cc}
r_{F}, & \text{if }x\in F_{U}^{\mathcal{\mathcal{P}}},\\
\gamma_{F}, & \text{otherwise,}
\end{array}\right.\label{eq:discount_function}
\end{equation}

where $r_{F}\left(\gamma_{F}\right)$ is a function of
$\gamma_{F}$ satisfying $\underset{\gamma_{F}\shortrightarrow1^{-}}{\lim}r_{F}\left(\gamma_{F}\right)=1$
and $\underset{\gamma_{F}\shortrightarrow1^{-}}{\lim}\frac{1-\gamma_{F}}{1-r_{F}\left(\gamma_{F}\right)}=0$.

Given a path $\boldsymbol{x}_{t}=x_{t}x_{t+1}\ldots$ starting from
$x_{t}$, the return is denoted by
\begin{equation}
\mathcal{D}\left(\boldsymbol{x}_{t}\right)\coloneqq\stackrel[i=0]{\infty}{\sum}\left(\stackrel[j=0]{i-1}{\prod}\gamma\left(\boldsymbol{x}_{t}\left[t+j\right]\right)\cdotp R\left(\boldsymbol{x}_{t}\left[t+i\right]\right)\right)\label{eq:DisctRetrn}
\end{equation}
where $\stackrel[j=0]{-1}{\prod}\coloneqq1$ and $\boldsymbol{x}_{t}\left[t+i\right]$
denotes the $\left(i+1\right)$th state in $\boldsymbol{x}_{t}$.
Based on (\ref{eq:DisctRetrn}), the expected return of any state
$x\in X$ under policy $\pi$ can be defined as 
\begin{equation}
U^{\boldsymbol{\pi}}\left(x\right)=\mathbb{E}^{\boldsymbol{\pi}}\left[\mathcal{D}\left(\boldsymbol{x}_{t}\right)\left|\boldsymbol{x}_{t}\left[t\right]=x\right|\right].\label{eq:ExpRetrn}
\end{equation}

A bottom strongly connected component (BSCC) of the Markov chain $MC_{\mathcal{\mathcal{P}}}^{\boldsymbol{\pi}}$ (Definition \ref{def:induced_markov_chain}) is a strongly connected component with no outgoing transitions.

\begin{lem}
\label{lemma:1} For any path $\boldsymbol{x}_{t}$ and $\mathcal{D}\left(\boldsymbol{x}_{t}\right)$
in (\ref{eq:DisctRetrn}), it holds that $0\leq\gamma_{F}\cdot\mathcal{D}\left(\boldsymbol{x}_{t}\left[t+1:\right]\right)\leq\mathcal{D}\left(\boldsymbol{x}_{t}\right)\leq1-r_{F}+r_{F}\cdot\mathcal{D}\left(\boldsymbol{x}_{t}\left[t+1:\right]\right)\leq1$,
where $\boldsymbol{x}_{t}\left[t+1:\right]$ denotes the suffix of
$\boldsymbol{x}_{t}$ starting from $x_{t+1}$. Let $BSCC\left(MC_{\mathcal{\mathcal{P}}}^{\boldsymbol{\pi}}\right)$
denote the set of all BSCCs of an induced Markov chain $MC_{\mathcal{\mathcal{P}}}^{\boldsymbol{\pi}}$ and let $X_{\mathcal{\mathcal{\mathcal{P}}}}^{\boldsymbol{\pi}}$ denotes the set of accepting states that belongs to a BSCC of $MC_{\mathcal{\mathcal{P}}}^{\boldsymbol{\pi}}$ s.t. $X_{\mathcal{\mathcal{\mathcal{P}}}}^{\boldsymbol{\pi}}\coloneqq\left\{ x\in X \bigl| x\in F_{U}^{\mathcal{\mathcal{P}}} \cap BSCC\left(MC_{\mathcal{\mathcal{P}}}^{\boldsymbol{\pi}}\right) \right\} $. Then, for any states $x\in X_{\mathcal{\mathcal{\mathcal{P}}}}^{\boldsymbol{\pi}}$,
it holds that$\underset{\gamma_{F}\shortrightarrow1^{-}}{\lim}U^{\boldsymbol{\pi}}\left(x\right)=1$.
\end{lem}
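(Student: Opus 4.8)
The statement has two parts --- a pathwise chain of inequalities for $\mathcal{D}$, and the limit $U^{\boldsymbol{\pi}}(x)\to1$ on $X_{\mathcal{P}}^{\boldsymbol{\pi}}$ --- which I would handle in that order. First I would prove $0\le\mathcal{D}(\boldsymbol{y})\le1$ for \emph{every} path $\boldsymbol{y}$ of $\mathcal{P}$. Nonnegativity is immediate since, by (\ref{eq:reward_function})--(\ref{eq:discount_function}), every summand of (\ref{eq:DisctRetrn}) is a product of factors $R(\cdot)\in\{0,1-r_F\}$ and $\gamma(\cdot)\in\{\gamma_F,r_F\}$, all lying in $[0,1]$. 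For the upper bound I would induct on the truncated return $\mathcal{D}_N$: splitting off the first term of (\ref{eq:DisctRetrn}) gives $\mathcal{D}_N(\boldsymbol{y})=R(y_0)+\gamma(y_0)\,\mathcal{D}_{N-1}(\boldsymbol{y}[1:])$, which equals $(1-r_F)+r_F\,\mathcal{D}_{N-1}(\boldsymbol{y}[1:])$ if $y_0\in F_U^{\mathcal{P}}$ and $\gamma_F\,\mathcal{D}_{N-1}(\boldsymbol{y}[1:])$ otherwise, so $\mathcal{D}_{N-1}\le1$ forces $\mathcal{D}_N\le1$; with the base case $\mathcal{D}_0(\boldsymbol{y})=R(y_0)\le1$, monotone convergence gives $\mathcal{D}(\boldsymbol{y})\in[0,1]$. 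Taking the same one-step identity at $N=\infty$, $\mathcal{D}(\boldsymbol{x}_t)=R(x_t)+\gamma(x_t)\,\mathcal{D}(\boldsymbol{x}_t[t+1:])$, which is $1-r_F+r_F\,\mathcal{D}(\boldsymbol{x}_t[t+1:])$ when $x_t\in F_U^{\mathcal{P}}$ and $\gamma_F\,\mathcal{D}(\boldsymbol{x}_t[t+1:])$ otherwise; plugging $\mathcal{D}(\boldsymbol{x}_t[t+1:])\in[0,1]$ and $\gamma_F,r_F\in[0,1)$ into these two cases checks every link of $0\le\gamma_F\mathcal{D}(\boldsymbol{x}_t[t+1:])\le\mathcal{D}(\boldsymbol{x}_t)\le1-r_F+r_F\mathcal{D}(\boldsymbol{x}_t[t+1:])\le1$ by elementary algebra.

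For the limit, let $C$ be the BSCC of $MC_{\mathcal{P}}^{\boldsymbol{\pi}}$ containing $x$; by the definition of $X_{\mathcal{P}}^{\boldsymbol{\pi}}$, $C$ is a closed, irreducible, positive recurrent class that meets $F_U^{\mathcal{P}}$, and any path entering $C$ stays in $C$. Put $v(\cdot):=1-U^{\boldsymbol{\pi}}(\cdot)\in[0,1]$ (using the first part and $U^{\boldsymbol{\pi}}\le1$). Taking expectations in the one-step identity and invoking the Markov property of the memoryless policy $\boldsymbol{\pi}$ yields the Bellman relations
$$v(y)=r_F\,\mathbb{E}^{\boldsymbol{\pi}}[v(X_1)\mid X_0=y]\ \ (y\in F_U^{\mathcal{P}}),\qquad v(y)=(1-\gamma_F)+\gamma_F\,\mathbb{E}^{\boldsymbol{\pi}}[v(X_1)\mid X_0=y]\ \ (y\notin F_U^{\mathcal{P}}),$$
where $X_0X_1\cdots$ is the induced chain. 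Since $x\in C$, it suffices to show $M:=\max_{y\in C}v(y)\to0$ as $\gamma_F\to1^-$; assume $M>0$. The maximiser cannot be accepting (else $M=r_F\mathbb{E}[v(X_1)]\le r_FM<M$), so fix a non-accepting $y^{*}\in C$ with $v(y^{*})=M$ and set $\sigma:=\inf\{n\ge0:X_n\in F_U^{\mathcal{P}}\}$ started from $y^{*}$; then $\sigma\ge1$, $\sigma<\infty$ a.s., and $\bar\sigma:=\mathbb{E}^{\boldsymbol{\pi}}[\sigma\mid X_0=y^{*}]<\infty$. Unrolling the non-accepting relation along $X_0,\dots,X_{\sigma-1}$ (formally: stop at $\sigma\wedge N$, let $N\to\infty$, using $\gamma_F^{N}\to0$ and boundedness) gives $M=\mathbb{E}^{\boldsymbol{\pi}}[(1-\gamma_F^{\sigma})+\gamma_F^{\sigma}v(X_\sigma)]$; since $X_\sigma\in F_U^{\mathcal{P}}\cap C$, the accepting relation and $v\le M$ on $C$ give $v(X_\sigma)\le r_FM$, whence
$$M\le1-(1-r_FM)\,\beta,\qquad \beta:=\mathbb{E}^{\boldsymbol{\pi}}[\gamma_F^{\sigma}\mid X_0=y^{*}]\in(0,1),$$
and solving for $M$ yields $M\le\dfrac{1-\beta}{1-\beta r_F}=\Bigl(1+\beta\,\dfrac{1-r_F}{1-\beta}\Bigr)^{-1}$.

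It remains to drive this bound to $0$. Because $\sigma\ge1$ and $1-\gamma_F^{\sigma}=(1-\gamma_F)(1+\gamma_F+\cdots+\gamma_F^{\sigma-1})\le(1-\gamma_F)\sigma$, we have $1-\beta=\mathbb{E}[1-\gamma_F^{\sigma}]\le(1-\gamma_F)\bar\sigma$, so $\dfrac{1-r_F}{1-\beta}\ge\dfrac{1}{\bar\sigma}\cdot\dfrac{1-r_F}{1-\gamma_F}\longrightarrow+\infty$ by the hypothesis $\lim_{\gamma_F\to1^-}\tfrac{1-\gamma_F}{1-r_F}=0$; since moreover $\beta\to1$ (dominated convergence, using $\sigma<\infty$ a.s.), we conclude $M\to0$, and then $1\ge U^{\boldsymbol{\pi}}(x)\ge1-M$ gives $\lim_{\gamma_F\to1^-}U^{\boldsymbol{\pi}}(x)=1$.

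\textbf{Expected main obstacle.} The first part is bookkeeping. The real work is the second: rigorously unrolling the value recursion over the \emph{random} stopping time $\sigma$ (a strong-Markov/optional-stopping argument), and then resolving the indeterminate $0/0$ limit $\frac{1-\beta}{1-\beta r_F}$. The BSCC hypothesis is used in exactly one place --- it guarantees $\bar\sigma<\infty$, hence $1-\beta=O(1-\gamma_F)$ --- and together with the two asymptotic conditions on $r_F(\gamma_F)$ (which force $\frac{1-r_F}{1-\gamma_F}\to\infty$) this is precisely what makes the per-visit accepting-state discount $r_F$ dominate the $(1-\gamma_F)$ reward deficits accumulated between accepting visits. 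Getting the direction $\frac{1-r_F}{1-\gamma_F}\to\infty$ right, rather than $\to0$, is the one genuinely delicate point.
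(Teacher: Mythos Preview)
Your proof is correct and, unlike the paper --- which omits the argument entirely and simply declares it ``a straightforward extension of Lemma~2 and Lemma~3 in \cite{Bozkurt2020}, by replacing LDBA with LDGBA'' --- you supply a complete, self-contained derivation. The pathwise inequality chain via the one-step recursion $\mathcal{D}(\boldsymbol{x}_t)=R(x_t)+\gamma(x_t)\,\mathcal{D}(\boldsymbol{x}_t[t+1:])$ is the standard route, and your treatment of the limit through the deficit $v=1-U^{\boldsymbol{\pi}}$, unrolled to the first accepting time $\sigma$ and then closed by the rate hypothesis $\tfrac{1-\gamma_F}{1-r_F}\to0$, is precisely the mechanism that drives the result (and is essentially how the cited reference argues in the single-accepting-set case). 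Your identification of the ``delicate point'' --- that one needs $\tfrac{1-r_F}{1-\gamma_F}\to\infty$, not the reverse --- is exactly right and is where the asymptotic coupling between $r_F$ and $\gamma_F$ is actually used.

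One small point to tighten: the maximiser $y^{*}$, and therefore $\sigma$, $\bar\sigma$ and $\beta$, may change with $\gamma_F$, so passing to the limit requires a bound on $\bar\sigma$ that is \emph{uniform} in the starting point. In a finite BSCC this is immediate --- replace $\bar\sigma$ throughout by $\bar\Sigma:=\max_{y\in C\setminus F_U^{\mathcal{P}}}\mathbb{E}^{\boldsymbol{\pi}}[\sigma\mid X_0=y]<\infty$, which depends only on the transition kernel of $MC_{\mathcal{P}}^{\boldsymbol{\pi}}$ and not on $\gamma_F$ --- and then your estimates $1-\beta\le(1-\gamma_F)\bar\Sigma$ and $\beta\ge1-(1-\gamma_F)\bar\Sigma$ yield $M\le\bigl(1+\beta\,\tfrac{1-r_F}{1-\beta}\bigr)^{-1}\to0$ uniformly. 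With that clarification the argument is airtight at the level of rigour the paper itself adopts.
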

The proof of Lemma \ref{lemma:1} is omitted since it is a straightforward
extension of Lemma 2 and Lemma 3 in \cite{Bozkurt2020}, by replacing
LDBA with LDGBA. Since we apply the LDGBA with several accepting sets which might result in more complicated situations, e.g., AMEC, NMEC and RMEC, we can not obtain the same results as in \cite{Bozkurt2020}. We then establish the following theorem which
is one of the main contributions.
\begin{thm}
\label{lemma:probability} Given the EP-MDP $\mathcal{\mathcal{\mathcal{P}}=M}\times\mathcal{{A}}_{\phi}$,
for any state $x\in X$, the expected return under any policy $\pi$
satisfies 
\begin{equation}
\exists i\in\left\{ 1,\ldots f\right\} ,\underset{\gamma_{F}\shortrightarrow1^{-}}{\lim}U^{\boldsymbol{\pi}}\left(x\right)=\Pr{}^{\boldsymbol{\pi}}\left[\diamondsuit F_{i}^{\mathcal{\mathcal{P}}}\right],\label{eq:reachability}
\end{equation}

where $\Pr^{\boldsymbol{\pi}}\left[\diamondsuit F_{i}^{\mathcal{\mathcal{P}}}\right]$
is the probability that the paths starting from state $x$ will eventually
intersect a $F_{i}^{\mathcal{\mathcal{P}}}\in F^{\mathcal{P}}$. 
\end{thm}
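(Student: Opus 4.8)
The plan is to fix an arbitrary policy $\boldsymbol{\pi}$ and reduce $\underset{\gamma_{F}\shortrightarrow1^{-}}{\lim}U^{\boldsymbol{\pi}}\left(x\right)$ to a reachability probability on the Markov chain $MC_{\mathcal{P}}^{\boldsymbol{\pi}}$ of Definition~\ref{def:induced_markov_chain}. Write $MC_{\mathcal{P}}^{\boldsymbol{\pi}}$ as the disjoint union of its transient class $\mathcal{T}_{\boldsymbol{\pi}}$ and recurrent classes $\mathcal{R}_{\boldsymbol{\pi}}^{1},\dots,\mathcal{R}_{\boldsymbol{\pi}}^{n_{R}}$. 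By Lemma~\ref{lemma:accepting set}, each $\mathcal{R}_{\boldsymbol{\pi}}^{j}$ is either \emph{accepting}, meaning $\mathcal{R}_{\boldsymbol{\pi}}^{j}\cap F_{i}^{\mathcal{P}}\neq\emptyset$ for every $i\in\{1,\dots,f\}$, or \emph{rejecting}, meaning $\mathcal{R}_{\boldsymbol{\pi}}^{j}\cap F_{U}^{\mathcal{P}}=\emptyset$; let $\mathcal{R}_{\mathrm{acc}}$ be the union of the accepting classes. Since a recurrent class visits either every accepting set infinitely often or none at all, $\Pr^{\boldsymbol{\pi}}[x\models\Acc_{p}]$ is exactly the probability that the induced path eventually enters $\mathcal{R}_{\mathrm{acc}}$.

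First I would evaluate the limit of $U^{\boldsymbol{\pi}}$ on the recurrent classes. If $x$ belongs to a rejecting class, every path from $x$ stays in that class, which meets no state of $F_{U}^{\mathcal{P}}$; hence $R\equiv0$ along the path, $\mathcal{D}\equiv0$, and $U^{\boldsymbol{\pi}}(x)=0$ for every $\gamma_{F}$. If $x$ belongs to an accepting class, then by (positive) recurrence the path returns to $F_{U}^{\mathcal{P}}$ infinitely often with almost surely finite return times; bounding $\mathcal{D}$ by the sandwich inequality of Lemma~\ref{lemma:1}, iterating it up to the random finite first hitting time of $F_{U}^{\mathcal{P}}$, and then invoking Lemma~\ref{lemma:1} together with dominated convergence (every integrand lies in $[0,1]$) gives $\underset{\gamma_{F}\shortrightarrow1^{-}}{\lim}U^{\boldsymbol{\pi}}(x)=1$. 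This is the LDGBA analogue of Lemmas~2--3 of \cite{Bozkurt2020}, with $F_{U}^{\mathcal{P}}$ playing the role of the single accepting set used there.

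Next, for transient $x\in\mathcal{T}_{\boldsymbol{\pi}}$, let $\tau$ be the first time the path enters $\bigcup_{j}\mathcal{R}_{\boldsymbol{\pi}}^{j}$, which is almost surely finite. Splitting the return (\ref{eq:DisctRetrn}) at $\tau$, $U^{\boldsymbol{\pi}}(x)$ equals the expected reward collected strictly before $\tau$ plus $\mathbb{E}^{\boldsymbol{\pi}}[\Gamma_{\tau}\,U^{\boldsymbol{\pi}}(\boldsymbol{x}_{t}[t+\tau])]$, where $\Gamma_{\tau}=\prod_{j=0}^{\tau-1}\gamma(\boldsymbol{x}_{t}[t+j])$. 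By Lemma~\ref{lemma:1} the prefix contribution is at most $\min\{1,\tau(1-r_{F})\}$, which tends pathwise to $0$ along almost every realisation and hence vanishes in expectation by dominated convergence, while $\Gamma_{\tau}$ tends pathwise to $1$. Combining this with the recurrent-class limits of the previous paragraph and one more application of dominated convergence yields $\underset{\gamma_{F}\shortrightarrow1^{-}}{\lim}U^{\boldsymbol{\pi}}(x)=\Pr^{\boldsymbol{\pi}}[\boldsymbol{x}_{t}[t+\tau]\in\mathcal{R}_{\mathrm{acc}}]=\Pr^{\boldsymbol{\pi}}[x\models\Acc_{p}]$ (the cases $x\in\mathcal{R}_{\mathrm{acc}}$ or $x$ in a rejecting class being recovered with $\tau=0$).

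It remains to rewrite $\Pr^{\boldsymbol{\pi}}[x\models\Acc_{p}]$ as $\Pr^{\boldsymbol{\pi}}[\diamondsuit F_{i}^{\mathcal{P}}]$ for a suitable index $i$. One direction is immediate: an accepting recurrent class meets $F_{i}^{\mathcal{P}}$ for every $i$ and is visited recurrently, so $\Pr^{\boldsymbol{\pi}}[x\models\Acc_{p}]\le\Pr^{\boldsymbol{\pi}}[\diamondsuit F_{i}^{\mathcal{P}}]$ for all $i$, and it suffices to exhibit one index for which equality holds. Here I would use the tracking-frontier dynamics (\ref{eq:Trk-fontier}): visiting a state of $F_{j}^{\mathcal{P}}$ clears $F_{j}$ from $T$, and $T$ is refilled only once every accepting set has been cleared, so on a run trapped in a rejecting class the frontier freezes after finitely many clears, whereas on a run entering $\mathcal{R}_{\mathrm{acc}}$ every index is cleared and re-enabled infinitely often; one then selects an index whose membership in $F_{i}^{\mathcal{P}}$ is only achievable along runs that also reach $\mathcal{R}_{\mathrm{acc}}$, so that the two events coincide. \emph{I expect this last step to be the main obstacle.} Extending Lemma~\ref{lemma:1} off the accepting states and absorbing the transient prefix are routine dominated-convergence arguments, but matching $\Pr^{\boldsymbol{\pi}}[x\models\Acc_{p}]$ to a \emph{single} reachability probability $\Pr^{\boldsymbol{\pi}}[\diamondsuit F_{i}^{\mathcal{P}}]$ requires careful bookkeeping of how much partial progress a run that never reaches $\mathcal{R}_{\mathrm{acc}}$ can make, and is exactly where the guard $q\in F_{j}\wedge F_{j}\subseteq T$ in the definition of $F_{j}^{\mathcal{P}}$ and the reset rule of (\ref{eq:Trk-fontier}) must be exploited.
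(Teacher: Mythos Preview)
Your route differs structurally from the paper's. The paper never passes through $\Pr^{\boldsymbol{\pi}}[x\models\Acc_p]$: it conditions $U^{\boldsymbol{\pi}}(x)$ from the outset on the event $\{\boldsymbol{x}_t\models\diamondsuit F_i^{\mathcal{P}}\}$ (equation~(\ref{eq:proof 1})) and then sandwiches the two conditional expectations separately. The lower bound chains the inequality of Lemma~\ref{lemma:1} with Jensen applied to $\gamma_F^{N_t}$, where $N_t$ is the first hitting time of $X_{\mathcal{P}}^{\boldsymbol{\pi}}$; the upper bound uses $\mathcal{D}\le 1-r_F^{M_t}$ on runs that never reach $F_i^{\mathcal{P}}$, with $M_t$ the absorption time into a non-accepting BSCC. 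Letting $\gamma_F\to 1^-$ collapses both bounds to $\Pr^{\boldsymbol{\pi}}[\diamondsuit F_i^{\mathcal{P}}]$. Because the split is already on $\diamondsuit F_i^{\mathcal{P}}$, the paper never faces your step~6 at all.

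Your steps 1--5 are sound, and the dominated-convergence argument is arguably cleaner than the paper's Jensen manoeuvre; it also produces precisely the quantity $\Pr^{\boldsymbol{\pi}}[x\models\Acc_p]$ that Theorem~\ref{thm:probability} actually consumes downstream. The cost of your detour is the extra reconciliation in step~6, and your worry there is well founded: a run can visit some $F_i^{\mathcal{P}}$ while still transient and then be absorbed in a rejecting class, so in general $\Pr^{\boldsymbol{\pi}}[\diamondsuit F_i^{\mathcal{P}}]\ge\Pr^{\boldsymbol{\pi}}[x\models\Acc_p]$ with strict inequality possible for \emph{every} $i$ simultaneously (take $f=2$, $T_0=F$, and a policy that with equal probability visits $F_1^{\mathcal{P}}$ or $F_2^{\mathcal{P}}$ once and then falls into a rejecting sink). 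The paper's direct split sidesteps this matching problem by construction; on the other hand, its inequality~(1) in (\ref{eq:proof 2}) --- replacing conditioning on $\diamondsuit F_i^{\mathcal{P}}$ by conditioning on the sub-event of hitting $X_{\mathcal{P}}^{\boldsymbol{\pi}}$ --- is exactly the place where the same transient-visit issue resurfaces and would need the frontier bookkeeping you describe. In short, both arguments rest on the BSCC dichotomy of Lemma~\ref{lemma:accepting set}; the paper bakes the target reachability event into the decomposition, whereas you compute the limit first and then try to reinterpret it, which is why the obstacle lands on you explicitly.
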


\begin{proof}
Proof can be found in Appendix~\ref{append:1}.
\end{proof}

Next, we will show in the following sections how Lemma \ref{lemma:accepting set} and Theorem \ref{lemma:probability}
can be leveraged to enforce the RL-agent satisfying the accepting condition of $\mathcal{P}$.
\begin{thm}
\label{thm2}Consider an MDP $\mathcal{M}$ and an LDGBA $\mathcal{{A}}_{\phi}$
corresponding to an LTL formula $\phi$. Based on Assumption 1, there exists a discount factor $\underline{\gamma}$,
with which any optimization method for (\ref{eq:ExpRetrn}) with $\gamma_{F}>\underline{\gamma}$
and $r_{F}>\underline{\gamma}$ can obtain a policy $\bar{\boldsymbol{\pi}}$,
such that the induced run $r_{\mathcal{\mathcal{\mathcal{\mathcal{\mathcal{\mathcal{P}}}}}}}^{\bar{\boldsymbol{\pi}}}$
satisfies the accepting condition $\mathcal{\mathcal{\mathcal{\mathcal{P}}}}$ with non-zero probability in the limit. 
\end{thm}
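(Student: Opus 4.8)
The plan is to sandwich the optimal return on the EP-MDP: bound it below by a witness policy supplied by Assumption~1, and bound it above for every policy that fails the acceptance condition, so that a maximizer is forced into the former case. Write $\bar{\boldsymbol{\pi}}$ for a policy returned by maximizing $U^{\boldsymbol{\pi}}(x_0)$ in \eqref{eq:ExpRetrn}; optimality gives $U^{\bar{\boldsymbol{\pi}}}(x_0)\ge U^{\boldsymbol{\pi}}(x_0)$ for every $\boldsymbol{\pi}$, so it suffices to show (i) some policy has strictly positive return once $\gamma_F$ (hence $r_F$) is close enough to $1$, and (ii) strictly positive return implies a strictly positive probability of satisfying $\Acc_{p}$.

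For (i) I would lift the policy guaranteed by Assumption~1 to $\mathcal{P}$, obtaining $\boldsymbol{\pi}'$ with $\Pr^{\boldsymbol{\pi}'}\!\left[x_0\models\Acc_{p}\right]=p_0>0$. By Definition~\ref{def:induced_markov_chain} and Lemma~\ref{lemma:accepting set}, the event $\{x_0\models\Acc_{p}\}$ is exactly the event of absorption into a recurrent class of $MC_{\mathcal{P}}^{\boldsymbol{\pi}'}$ that intersects every $F_i^{\mathcal{P}}$, so such a class is reached from $x_0$ with probability $p_0$ and contains states of $X_{\mathcal{P}}^{\boldsymbol{\pi}'}$. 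From within it some $F_i^{\mathcal{P}}$ is visited almost surely, so $\Pr^{\boldsymbol{\pi}'}\!\left[\diamondsuit F_i^{\mathcal{P}}\right]\ge p_0$ for the index $i$ furnished by Theorem~\ref{lemma:probability}, whence $\lim_{\gamma_F\to1^-}U^{\boldsymbol{\pi}'}(x_0)\ge p_0$ (Lemma~\ref{lemma:1} gives the complementary fact that the return at accepting states inside such a class tends to $1$). Therefore there is $\underline{\gamma}<1$ with $U^{\boldsymbol{\pi}'}(x_0)\ge p_0/2$ whenever $\gamma_F>\underline{\gamma}$ and $r_F>\underline{\gamma}$, so that $U^{\bar{\boldsymbol{\pi}}}(x_0)\ge p_0/2>0$.

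For (ii), suppose for contradiction that $\Pr^{\bar{\boldsymbol{\pi}}}\!\left[x_0\models\Acc_{p}\right]=0$. Then, by Lemma~\ref{lemma:accepting set}, no recurrent class of $MC_{\mathcal{P}}^{\bar{\boldsymbol{\pi}}}$ meets any $F_i^{\mathcal{P}}$; since the embedded tracking-frontier update~\eqref{eq:Trk-fontier} deletes $F_j$ from $T$ after each visit and resets $T$ only once every accepting set has been seen, almost every run can complete only finitely many rounds, so the number $N$ of visits to $F_U^{\mathcal{P}}$ is almost surely finite. Unrolling the recursive upper bound $\mathcal{D}(\boldsymbol{x}_t)\le 1-r_F+r_F\,\mathcal{D}(\boldsymbol{x}_t[t+1:])$ of Lemma~\ref{lemma:1} along the run (the factors $\gamma_F<1$ accumulated on non-accepting stretches only shrink the return) gives return at most $1-r_F^{\,N}\le N(1-r_F)$ on a run with $N$ accepting visits, hence $U^{\bar{\boldsymbol{\pi}}}(x_0)\le\mathbb{E}^{\bar{\boldsymbol{\pi}}}\!\left[1-r_F^{\,N}\right]$; equivalently, Theorem~\ref{lemma:probability} identifies $\lim_{\gamma_F\to1^-}U^{\bar{\boldsymbol{\pi}}}(x_0)$ with a reachability probability that, in the absence of any accepting recurrent class, puts no mass on a BSCC and hence vanishes. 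Enlarging $\underline{\gamma}$ so that this bound falls below $p_0/2$ contradicts $U^{\bar{\boldsymbol{\pi}}}(x_0)\ge p_0/2$; thus $\Pr^{\bar{\boldsymbol{\pi}}}\!\left[x_0\models\Acc_{p}\right]>0$, i.e.\ the induced run $r_{\mathcal{P}}^{\bar{\boldsymbol{\pi}}}$ satisfies the acceptance condition of $\mathcal{P}$ with non-zero probability.

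The main obstacle is the uniformity in (ii): the maximizer $\bar{\boldsymbol{\pi}}$ itself depends on $\gamma_F$, so $\mathbb{E}^{\bar{\boldsymbol{\pi}}}[1-r_F^{\,N}]$ must be shown to vanish uniformly over the whole $\gamma_F$-indexed family of optimal policies — for instance via a uniform bound on the expected number of visits to $F_U^{\mathcal{P}}$ before absorption into a non-accepting recurrent class, or by a compactness/subsequence argument on policies — and the transient/recurrent decomposition of Definition~\ref{def:induced_markov_chain} together with the BSCC language of Lemma~\ref{lemma:1} has to be handled carefully on the continuous state space. Granting the continuous-state analogues of precisely the facts that already underlie Lemma~\ref{lemma:1} and Theorem~\ref{lemma:probability}, the argument closes; this is also the place where I expect the appendix proof of Theorem~\ref{lemma:probability} to be reused almost verbatim.
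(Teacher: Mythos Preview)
Your contradiction strategy---assume the maximizer fails acceptance, then exhibit a witness (from Assumption~1) with strictly larger return---is the same as the paper's, but the execution differs. The appendix proof does \emph{not} go through Theorem~\ref{lemma:probability}; instead it works directly with the transient/recurrent matrix decomposition of $MC_{\mathcal{P}}^{\boldsymbol{\pi}^*}$: it picks a state $x_R$ in a recurrent class of the putative non-accepting optimizer $\boldsymbol{\pi}^*$, observes that $U_{\boldsymbol{\pi}^*}^{rec}(x_R)=0$ \emph{exactly} (every reward there is zero by Lemma~\ref{lemma:accepting set}, no limit in $\gamma_F$ needed), and then lower-bounds $U_{\bar{\boldsymbol{\pi}}}(x_R)$ for the fixed witness $\bar{\boldsymbol{\pi}}$ by a two-case analysis ($x_R$ recurrent vs.\ transient under $\bar{\boldsymbol{\pi}}$) using explicit transition-matrix estimates. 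Your route via Theorem~\ref{lemma:probability} is conceptually cleaner and stays at $x_0$, but it trades the paper's ``$=0$'' for a limit ``$\to 0$'', which is exactly what generates the uniformity-in-$\gamma_F$ obstacle you correctly flag: your bound $\mathbb{E}^{\bar{\boldsymbol{\pi}}}[1-r_F^{\,N}]$ has both $\bar{\boldsymbol{\pi}}$ and $N$ moving with $\gamma_F$. The paper's version avoids that particular limit, so the only $\gamma_F$-dependence left is in the witness lower bound, which depends only on the \emph{fixed} policy $\bar{\boldsymbol{\pi}}$ and the MDP---though the paper is silent on the fact that the comparison state $x_R$ itself may drift with $\gamma_F$, so neither argument is fully airtight on this point. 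Your proposal is valid at the paper's level of rigor and more honest about where the subtlety lies.
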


\begin{proof}
Proof can be found in Appendix~\ref{Append:2}.
\end{proof}

Theorem \ref{thm2} proves that by selecting $\gamma_{F}>\underline{\gamma}$
and $r_{F}>\underline{\gamma}$, optimizing the expected return in
(\ref{eq:ExpRetrn}) can find a policy satisfying the given task $\phi$ with non-zero probability. 
\begin{thm}
\label{thm:probability} Given an MDP $\mathcal{M}$ and an LDGBA
$\mathcal{{A}}_{\phi}$, by selecting $\gamma_{F}\shortrightarrow1^{-}$,
the optimal policy in the limit $\boldsymbol{\pi}^{*}$ that maximizes the expected
return (\ref{eq:ExpRetrn}) of the corresponding EP-MDP also maximizes
the probability of satisfying $\phi$, i.e., $\Pr^{\boldsymbol{\pi}^{*}}\left[x\models Acc_{\mathcal{P}}\right]=\Pr_{max}\left[x\models Acc_{\mathcal{P}}\right]$. 
\end{thm}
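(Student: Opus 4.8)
The plan is to combine Theorem~\ref{lemma:probability} with Lemma~\ref{lemma:accepting set} to show that, in the limit $\gamma_F \to 1^-$, the expected return of \emph{every} policy on $\mathcal{P}$ equals the probability that its induced run is accepting, and then to optimise over policies. First I would fix an arbitrary policy $\boldsymbol{\pi}$ and decompose $MC_{\mathcal{P}}^{\boldsymbol{\pi}}$ into its transient class and its recurrent classes (Definition~\ref{def:induced_markov_chain}). By Lemma~\ref{lemma:accepting set}, each recurrent class meets either all of the accepting sets $F_i^{\mathcal{P}}$ (an AMEC) or none of them, so no recurrent class hits only part of $F^{\mathcal{P}}$; hence a run is accepting \emph{iff} it is absorbed into one of the accepting BSCCs, i.e., the event $x \models \Acc_{\mathcal{P}}$ coincides with reaching such a BSCC. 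On an accepting BSCC Lemma~\ref{lemma:1} gives $\lim_{\gamma_F \to 1^-} U^{\boldsymbol{\pi}}(x) = 1$, on a non-accepting BSCC the accumulated reward is eventually $0$, and the reward from the finitely many accepting visits made during the transient phase is of order $1-r_F$ and vanishes as $\gamma_F \to 1^-$ (this last estimate is exactly the one established in the proof of Theorem~\ref{lemma:probability}). Combining these facts with Theorem~\ref{lemma:probability} then yields
\[
\lim_{\gamma_F \to 1^-} U^{\boldsymbol{\pi}}(x) \;=\; \Pr{}^{\boldsymbol{\pi}}\!\left[x \models \Acc_{\mathcal{P}}\right] \qquad\text{for every policy }\boldsymbol{\pi}\text{ and every }x \in X.
\]

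Given this identity the theorem is immediate by an optimisation step. For every $x$ and every policy $\boldsymbol{\pi}$, optimality of $\boldsymbol{\pi}^*$ (in the limit) gives
\[
\lim_{\gamma_F \to 1^-} U^{\boldsymbol{\pi}^*}(x) \;\ge\; \lim_{\gamma_F \to 1^-} U^{\boldsymbol{\pi}}(x) \;=\; \Pr{}^{\boldsymbol{\pi}}\!\left[x \models \Acc_{\mathcal{P}}\right],
\]
so $\lim_{\gamma_F \to 1^-} U^{\boldsymbol{\pi}^*}(x) \ge \sup_{\boldsymbol{\pi}} \Pr^{\boldsymbol{\pi}}\!\left[x \models \Acc_{\mathcal{P}}\right] = \Pr_{\max}\!\left[x \models \Acc_{\mathcal{P}}\right]$; applying the identity to $\boldsymbol{\pi}^*$ itself gives the matching upper bound $\lim_{\gamma_F \to 1^-} U^{\boldsymbol{\pi}^*}(x) = \Pr^{\boldsymbol{\pi}^*}\!\left[x \models \Acc_{\mathcal{P}}\right] \le \Pr_{\max}\!\left[x \models \Acc_{\mathcal{P}}\right]$. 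Hence $\Pr^{\boldsymbol{\pi}^*}\!\left[x \models \Acc_{\mathcal{P}}\right] = \Pr_{\max}\!\left[x \models \Acc_{\mathcal{P}}\right]$, which solves Problem~\ref{Prob:2}; and since a memoryless $\boldsymbol{\pi}^*$ on $\mathcal{P}$ induces a finite-memory $\boldsymbol{\xi}^*$ on $\mathcal{M}$ whose traces satisfy $\phi$ exactly on the event $\Acc_{\mathcal{P}}$ (Section~\ref{subsec:PMDP}), this transfers to $\Pr_M^{\boldsymbol{\xi}^*}(\phi) = \sup_{\boldsymbol{\xi}} \Pr_M^{\boldsymbol{\xi}}(\phi)$, solving Problem~\ref{Prob1}. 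Assumption~1, through Theorem~\ref{thm2}, guarantees $\Pr_{\max}\!\left[x \models \Acc_{\mathcal{P}}\right] > 0$, so the claim is non-vacuous.

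The step I expect to be the real obstacle is the first inequality in the last display, i.e., exchanging $\lim_{\gamma_F \to 1^-}$ with $\sup_{\boldsymbol{\pi}}$: the statement treats $\boldsymbol{\pi}^*$ as ``the optimal policy in the limit'' rather than as a maximiser for a fixed $\gamma_F$, and the EP-MDP carries continuous state and action spaces, so standard finite-MDP arguments do not apply verbatim. I would handle this by a Blackwell-optimality-style argument along the lines of \cite{Bozkurt2020}, but run on the EP-MDP: for each $\gamma_F > \underline{\gamma}$ there is a memoryless optimal policy $\boldsymbol{\pi}^*_{\gamma_F}$ whose value dominates that of every fixed policy (in particular of an accepting policy, which exists by Theorem~\ref{thm2}), one shows the values $U^{\boldsymbol{\pi}^*_{\gamma_F}}(x)$ are monotone/convergent as $\gamma_F \to 1^-$ with limit $\sup_{\boldsymbol{\pi}} \Pr^{\boldsymbol{\pi}}\!\left[x \models \Acc_{\mathcal{P}}\right]$, and any accumulation point of the policies $\boldsymbol{\pi}^*_{\gamma_F}$ is the desired $\boldsymbol{\pi}^*$. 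I would also emphasise that the whole reduction rests on Lemma~\ref{lemma:accepting set}, hence on the embedded tracking-frontier construction: it is precisely what rules out NMEC/RMEC-type recurrent classes visiting only some accepting sets, which would otherwise decouple ``$\lim_{\gamma_F \to 1^-} U^{\boldsymbol{\pi}}(x) = 1$'' from ``the run is accepting''.
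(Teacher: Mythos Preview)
Your proposal is correct and follows essentially the same line as the paper's proof: both invoke Theorem~\ref{thm2} to ensure an accepting policy exists and Theorem~\ref{lemma:probability} (together with Lemma~\ref{lemma:accepting set}) to identify the limiting expected return with the probability of entering an AMEC, then optimise over policies. Your treatment is in fact more explicit than the paper's four-line argument---in particular about the $\lim/\sup$ exchange, which the paper handles informally with the single sentence ``optimizing $\lim_{\gamma_F\to 1^-} U^{\boldsymbol{\pi}^*}(x)$ is equal to optimizing the probability of entering AMECs.''
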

\begin{proof}
Since $\gamma_{F}\shortrightarrow1^{-}$, we have $\gamma_{F}>\underline{\gamma}$
and $r_{F}>\underline{\gamma}$ from Theorem \ref{thm2}. There exists
an induced run $r_{\mathcal{\mathcal{\mathcal{\mathcal{\mathcal{\mathcal{P}}}}}}}^{\boldsymbol{\pi}^{*}}$
satisfying the accepting condition of $\mathcal{\mathcal{\mathcal{\mathcal{P}}}}$.
According to Theorem \ref{lemma:probability}, $\underset{\gamma_{F}\shortrightarrow1^{-}}{\lim}U^{\boldsymbol{\pi}^{*}}\left(x\right)$
is exactly equal to the probability of visiting the accepting sets
of an AMEC. Optimizing $\underset{\gamma_{F}\shortrightarrow1^{-}}{\lim}U^{\boldsymbol{\pi}^{*}}\left(x\right)$
is equal to optimizing the probability of entering AMECs. 
\end{proof}

\subsection{Reward Shaping \label{subsec:reward_shaping}}

%%%TIKZ FORMAT%%%
\begin{figure}[!t]\centering
	{{
	    \scalebox{.8}{
			\begin{tikzpicture}[shorten >=1pt,node distance=2.5cm,on grid,auto] 
			\node[state,initial] (q_0)   {$q_0$}; 
			\node[state] (q_1) [right=of q_0, label=below:$\textcolor{blue}{\Phi_1}$]  {$q_1$};
			\node[state] (q_2) [right=of q_1, label=below:$\textcolor{blue}{\Phi_2}$]  {$q_2$};
			\node[state,accepting, label=right:$\textcolor{ForestGreen}{F_1}$] (q_3) [right=of q_2]  {$q_3$};
			\node[state] (q_4) [below=of q_0]  {$q_{sink}$};
			\path[->] 
			(q_0) edge [bend left=0] node {$T_1$} (q_1)
			(q_0) edge [loop above] node {$\lnot\left(T_1\vee U\right)$} (q_0)
			(q_0) edge [bend left=0] node {$U$} (q_4)
			(q_1) edge [bend left=0] node {$T_2$} (q_2)
			(q_1) edge [loop above] node {$\lnot\left(T_2\vee U\right)$} (q_1)
			(q_1) edge [bend left=0] node {$U$} (q_4)
			(q_2) edge [bend left=0] node {$T_3$} (q_3)
			(q_2) edge [loop above] node {$\lnot\left(T_3\vee U\right)$} (q_2)
			(q_2) edge [bend left=0] node {$U$} (q_4)
			(q_3) edge [loop above] node {$T_3$} (q_3)
			(q_4) edge [loop right] node {$\operatorname{True}$} (q_4);
			\end{tikzpicture}
			}
			$F=\{\{q_3\}\}$
			}}
		\caption{\label{fig:reward_shaping} LDGBA $\mathcal{A}_{\varphi_{P}}$ expressing $\varphi_{P}=\lozenge\left(\mathtt{T1}\land\lozenge\mathtt{\left(\mathtt{T2}\land\lozenge\mathtt{\mathtt{T}3}\right)}\right)\land\lnot\oblong\mathtt{U}$}
	\end{figure}
%%%%%%%%%%%%%%%%%

Since the base reward function
in Section \ref{subsec:RL-reward} is always zero for 
%any state $x=\left(s,q, T\right)$ where
$x\notin F_{U}^{\mathcal{\mathcal{P}}}$, the reward signal might become sparse. To resolve this, we propose
a potential function $\Phi:X\shortrightarrow\mathbb{R}$, and transform the reward as follows:

\begin{equation}
R'\left(x,u^{\mathcal{P}},x'\right)=R\left(x\right)+\gamma\left(x\right)\cdot\Phi\left(x'\right)-\Phi\left(x\right)\label{eq:Shaped_Reward}
\end{equation}

As shown in \cite{Ng1999}, given any MDP model, e.g.,
EP-MDP $\mathcal{P}$, transforming the reward function using a potential function $\Phi\left(x\right)$ as in $R'\left(x,u^{\mathcal{P}},x'\right)$ will not change the
set of optimal policies. 
%\edit{if for all $x\in X\setminus{x_0}$, $u^{\mathcal{P}} \in U^{\mathcal{P}}$, and $x'\in X$}
Thus, a real-valued function $\Phi\left(x\right)$ will improve the learning performance while guaranteeing that the resulted policies via $R'\left(x,u^{\mathcal{P}},x'\right)$
are still optimal with respect to the base reward function $R\left(x\right)$.

Given $\mathcal{\mathcal{\mathcal{P}}=M}\times\mathcal{A}_{\phi}={\left(X,U^{\mathcal{P}},p^{\mathcal{P}},x_{0},F^{\mathcal{P}},T,f_{V},\mathcal{B}\right)}$
with $\mathcal{\mathcal{\mathcal{A}_{\phi}}}=\left(Q,\Sigma,\delta,q_{0},F\right)$,
let $F_{U}=\left\{ q\in Q \bigl| q\in F_{i},\forall i\in\left\{ 1,\ldots f\right\} \right\} $ denote the union of automaton accepting states. For the states of $\mathcal{P}$ whose automaton states belong to $Q\setminus\left(F_{U}\cup q_{0}\cup Q_{sink}\right)$, it is desirable to assign positive rewards when the agent first visits them and assign large value of reward to the accepting states to enhance the convergence of neural network.
This is because starting from the automaton initial state, any automaton state that can reach any of the accepting sets has to be explored.
To this end, an automaton tracking-frontier set $T_{\Phi}$ is designed to keep
track of unvisited automaton components $Q\setminus\left(q_{0}\cup Q_{sink}\right)$,
and $T_{\Phi}$ is initialized as $T_{\Phi0}=Q\setminus\left(q_{0}\cup Q_{sink}\right)$. The set $T_{\Phi0}$ is then updated after each transition $\left(\left(s,q,T\right),u^{\mathcal{P}},\left(s',q',T\right)\right)$
of $\mathcal{\mathcal{P}}$ as:

\begin{equation}
f_{\Phi}\left(q',T_{\Phi}\right)=\left\{ \begin{array}{cc}
T_{\Phi}\setminus q', & \text{if }q\in T_{\Phi},\\
T_{\Phi0}\setminus q' & \text{if }\ensuremath{\mathcal{B}}=\operatorname{True},\\
T_{\Phi}, & \text{otherwise. }
\end{array}\right.\label{eq:automaton-fontier}
\end{equation}
The set $T_{\Phi}$ will only be reset when $\mathcal{B}$ in $f_{V}$ becomes $\operatorname{True}$,
indicating that all accepting sets in the current round have been visited.
Based on (\ref{eq:automaton-fontier}), the potential function $\Phi\left(x\right)$
for $x=\left(s,q,T\right)$ is constructed as:

\begin{equation}
\Phi\left(x\right)=\left\{ \begin{array}{cc}
\eta_{\Phi}\cdot(1-r_{F}), & \text{if }q\in T_{\Phi},\\
0, & \text{otherwise}
\end{array}\right.\label{eq:potential_function}
\end{equation}

where $\eta_{\Phi}> 0$ is the shaing parameter. Intuitively, the value of potential function for
unvisited and visited states in $T_{\Phi0}$ is equal to $\eta_{\Phi}\cdot(1-r_{F})$
and $0$ respectively, in each round, which will guide the system
finally reach any of the accepting sets. 

To illustrate the idea, consider an LTL formula $\varphi_{P}=\lozenge\left(\mathtt{T1}\land\lozenge\mathtt{\left(\mathtt{T2}\land\lozenge\mathtt{\mathtt{T}3}\right)}\right)\land\lnot\oblong\mathtt{U}$, e.g., the agent should always avoid obstacles and visit $\mathtt{T1}$,
$\mathtt{T2}$, $\mathtt{T3}$ sequentially. The corresponding LDGBA $\mathcal{A}_{\varphi_{P}}$
is shown in Fig. \ref{fig:reward_shaping}. Let's denote any product
state with the same automaton component and an arbitrary MDP state as
$x\left(\left\llbracket s\right\rrbracket ,q,T\right)$, where the MDP component can be different. Note that only state $q_{3}$ belongs to the accepting
set and the original reward function $R\left(x\right)=0$,
$\forall x=\left(\left\llbracket s\right\rrbracket ,q_{12},T\right)$
with. $q_{12}\neq q_{3}$. However, for $x_{1}=\left(\left\llbracket s\right\rrbracket ,q_{1},T\right)$
and $x_{2}=\left(\left\llbracket s\right\rrbracket ,q_{2},T\right)$,
we have $\Phi\left(x_{1}\right)\neq0$ and $\Phi\left(x_{2}\right)\neq0$
if the corresponding automaton component has not been visited yet (i.e., $q_{1}$, $q_{2}$ still in $T_{\Phi}$) by ($\ref{eq:automaton-fontier}$) and ($\ref{eq:potential_function}$).
For instance, given a run $\boldsymbol{x}=\left(\left\llbracket s\right\rrbracket ,q_{0},T\right)u_{0}^{\mathcal{P}}\left(\left\llbracket s\right\rrbracket ,q_{1},T\right)u_{1}^{\mathcal{P}}\left(\left\llbracket s\right\rrbracket ,q_{2},T\right)u_{2}^{\mathcal{P}}\left(\left\llbracket s\right\rrbracket ,q_{3},T\right),$
the associated shaped reward for each transition is $R'\left(\left(\left\llbracket s\right\rrbracket ,q_{0},T\right),u_{0}^{\mathcal{P}},\left(\left\llbracket s\right\rrbracket ,q_{1},T\right)\right)=R'\left(\left(\left\llbracket s\right\rrbracket ,q_{1},T\right),u_{1}^{\mathcal{P}},\left(\left\llbracket s\right\rrbracket ,q_{2},T\right)\right)=\gamma_{F}\cdot\left(1-r_{F}\right)$, where $\gamma_{F}$ is obtained based on (\ref{eq:discount_function}).
Note that every time, if the $q_{i}$ has been visited, it'll be removed
from $T_{\Phi}$, resulting in $\Phi\left(x\right)=0$ in future transitions
until $T_{\Phi}$ is reset.

\subsection{Modular Deep Deterministic Policy Gradient \label{subsec:RL}}

\begin{figure}
	\centering{}\includegraphics[scale=0.40]{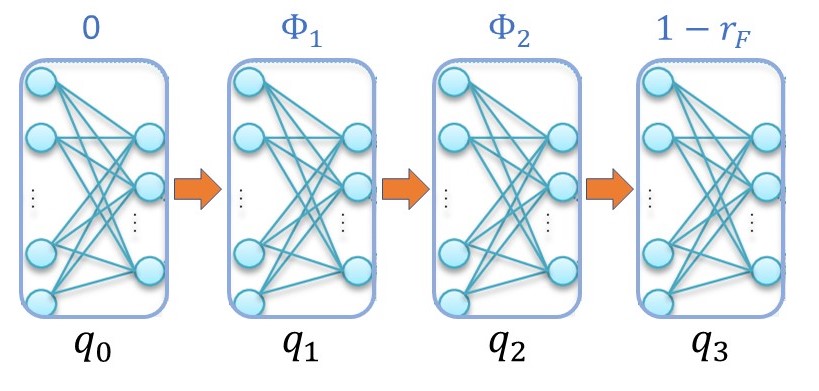}
	\caption{\label{fig:modular_DDPG} Modular DDPG for LDGBA $\mathcal{A}_{\varphi_{P}}$} in Fig. \ref{fig:reward_shaping} 
\end{figure}

\begin{algorithm}
\caption{\label{Alg2} Modular DDPG}

\scriptsize

\singlespacing

\begin{algorithmic}[1]

\Procedure {Input: } {MDP $\mathcal{M}$ , LDGBA $\mathcal{{A}}_{\phi}$}

{Output: } {modular DDPG for optimal policy $\boldsymbol{\pi}^{*}$
} 

{Initialization: } {$\left|Q\right|$ actor $\boldsymbol{\pi}_{q_{i}}\left(x\left|\theta^{u_{q_{i}}}\right.\right)$
and critic networks $Q_{q_{i}}\left(x,u^{\mathcal{P}}\left|\theta^{Q_{q_{i}}}\right.\right)$
with arbitrary weights $\theta^{u_{q_{i}}}$ and $\theta^{\boldsymbol{Q}_{q_{i}}}$
for all $q_{i}\in Q$}; {$\left|Q\right|$ corresponding target
networks $\boldsymbol{\pi}_{q_{i}}'\left(x\left|\theta^{u_{q_{i}}'}\right.\right)$
and $Q_{q_{i}}'\left(x,u^{\mathcal{P}}\left|\theta^{Q_{q_{i}}'}\right.\right)$
with weights $\theta^{u_{q_{i}}'}$ and $\theta^{Q_{q_{i}}'}$ for
each $q_{i}\in Q$, respectively}; {$\left|Q\right|$ replay buffers
$B_{q_{i}}$}; {$\left|Q\right|$ random processes noise $N_{q_{i}}$}

\State set $r_{F}=0.99$ and $\gamma_{F}=0.9999$ to determine $R\left(x\right)$
and $\gamma\left(x\right)$

\State set maximum episodes $E$ and iteration number $\tau$

\For { each episode in $E$ }

\State set $t=0$

\State sample an initial state $s_{0}$ of $\mathcal{M}$ and $q_{0}$
of $\mathcal{{A}}_{\phi}$ as $s_{t},q_{t}$

\State set $t=0$ and construct an initial product state $x_{t}=\left(s_{t},q_{t},T\right)$, \newline
		\hspace*{3.0em}where $T=T_{0}$

\While {$t\leq\tau$ }

\State select action $u_{t}^{\mathcal{P}}=\boldsymbol{\pi}_{q_{t}}\left(x\left|\theta^{u_{q_{i}}}\right.\right)+R_{q_{t}}$
based on exploitation versus \newline
		\hspace*{4.6em}exploration noise

\State execute $u_{t}^{\mathcal{P}}$, ${\delta}$ and observe
$x_{t+1}=\left(s_{t+1},q_{t+1},T\right)$, $R\left(x_{t+1}\right)$,
\newline
		\hspace*{4.6em}$\Phi\left(x_{t+1}\right)$, $\gamma\left(x_{t+1}\right)$ 

\State obtain $\Phi\left(x_{t}\right)$ based on current $T_{\Phi}$ and calculate $R'\left(x_{t},u_{t}^{\mathcal{P}},x_{t+1}\right)$

\State execute the updates via $f_{V}\left(q_{t+1},T\right)$
and $f_{\Phi}\left(q_{t+1},T_{\Phi}\right)$

\State store the sample $\left(x_{t},u_{t}^{\mathcal{P}},R'\left(x_{t},u_{t}^{\mathcal{P}},x_{t+1}\right),\gamma\left(x_{t+1}\right),x_{t+1}\right)$
\newline
		\hspace*{4.8em}in replay buffers $B_{q_{t}}$

\State mini-batch sampling $N$ data from the replay buffers $B_{q_{t}}$

\State calculate target values for each $i\in N$ as:

\[
{y_{i}=R'\left(x_{i},u_{i}^{\mathcal{P}},x_{i+1}\right)+\gamma\left(x_{i}\right)\cdot Q_{q_{i+1}}'\left(x_{i+1},u_{i+1}^{\mathcal{P}}\left|\theta^{Q_{q_{i+1}}'}\right.\right)}
\]

\State update weights $\theta^{Q_{q_{t}}}$ of critic neural network
$Q_{q_{t}}\left(x,u^{\mathcal{P}}\left|\theta^{Q_{q_{t}}}\right.\right)$ \newline
		\hspace*{4.8em}by minimizing the loss function:

\[
L=\frac{1}{N}\stackrel[i=1]{N}{\sum}\left(y_{i}-Q_{q_{t}}\left(x_{i},u_{i}^{\mathcal{P}}\left|\theta^{Q_{q_{t}}}\right.\right)\right)^{2}
\]

\State update weights $\theta^{u_{q_{t}}}$ of actor neural network
$\boldsymbol{\pi}_{q_{t}}\left(x\left|\theta^{u_{q_{t}}}\right.\right)$ \newline
		\hspace*{4.8em}by maximizing the policy gradient:

\[
\begin{aligned}\nabla_{\theta^{u_{q_{t}}}}U^{q_{t}}\thickapprox & \frac{1}{N}\stackrel[i=1]{N}{\sum}\left(\nabla_{u^{\mathcal{P}}}Q_{q_{t}}\left(x_{i},u^{\mathcal{P}}\left|\theta^{Q_{q_{t}}}\right.\right)\left|_{u^{\mathcal{P}}=\boldsymbol{\pi}_{q_{t}}\left(x_{i}\left|\theta^{u_{q_{t}}}\right.\right)}\right.\right.\\
 & \left.\cdot\nabla_{\theta^{u_{q_{t}}}}\boldsymbol{\pi}_{q_{t}}\left(x_{i}\left|\theta^{u_{q_{t}}}\right.\right)\right)
\end{aligned}
\]

\State soft update of target networks:

\begin{equation}
\begin{array}{c}
\theta^{u_{q_{t}}'}\leftarrow\tau\theta^{u_{q_{t}}}+\left(1-\tau\right)\theta^{u_{q_{t}}'}\\
\theta^{Q_{q_{t}}'}\leftarrow\tau\theta^{Q_{q_{t}}}+\left(1-\tau\right)\theta^{Q_{q_{t}}'}
\end{array}\label{eq:soft_update}
\end{equation}

\State $x_{t}\leftarrow x_{t+1}$ and $t++$
\EndWhile

\EndFor

\EndProcedure

\end{algorithmic}
\end{algorithm}

To deal with continuous-state and continuous-action MDPs, a deep deterministic
policy gradient (DDPG) \cite{Lillicrap2015} is adopted in this work to approximate the
current deterministic policy via a parameterized function $\boldsymbol{\pi}\left(x\left|\theta^{u}\right.\right)$ called actor. The actor is a deep neural network whose set of weights are $\theta^{u}$. The critic function  also applies a deep neural network with parameters $\theta^{Q}$ to approximate action-value function $Q\left(x,u^{\mathcal{P}}\left|\theta^{Q}\right.\right)$, which is updated by minimizing the following loss function:

\begin{equation}
\begin{array}{c}
L\left(\theta^{Q}\right)=\mathbb{E}_{s\sim\rho^{\beta}}^{\boldsymbol{\pi}}\left[\left(y-Q\left(x,\boldsymbol{\pi}\left(x\left|\theta^{u}\right.\right)\left|\theta^{\boldsymbol{Q}}\right.\right)\right)^{2}\right],
\end{array}\label{eq:loss_function}
\end{equation}

where $\rho^{\beta}$ is the state distribution under any arbitrary
policy $\beta$, and $y=R'\left(x,u^{\mathcal{P}},x'\right)+\gamma\left(x\right)Q\left(\left.x',u^{\mathcal{P}'}\right|\theta^{Q}\right)$
with $u^{\mathcal{P}'}=\boldsymbol{\pi}\left(x'\left|\theta^{u}\right.\right)$.
The actor can be updated by applying the chain rule to the expected
return with respect to actor parameters $\theta^{u}$ as the following
policy gradient theorem \cite{Lillicrap2015}:

\begin{equation}
\begin{array}{c}
\nabla_{\theta^{u}}U^{u}\left(x\right)\thickapprox\mathbb{E}_{s\sim\rho^{\beta}}^{\boldsymbol{\pi}}\left[\nabla_{\theta^{u}}Q\left(x,\boldsymbol{\pi}\left(x\left|\theta^{u}\right.\right)\left|\theta^{Q}\right.\right)\right]\\
=\mathbb{E}_{s\sim\rho^{\beta}}^{\boldsymbol{\pi}}\left[\nabla_{u^{\mathcal{P}}}Q\left(x,u^{\mathcal{P}}\left|\theta^{Q}\right.\right)\left|_{u^{\mathcal{P}}=\boldsymbol{\pi}\left(x\left|\theta^{u}\right.\right)}\nabla_{\theta^{u}}\boldsymbol{\pi}\left(x\left|\theta^{u}\right.\right)\right.\right].
\end{array}\label{eq:actor_update}
\end{equation}

Inspired by \cite{Yuan2019} and \cite{Hasanbeig2020}, the complex
LTL task $\phi$ can be divided into simple composable modules. Each
state of the automaton in the LDGBA is module
and each transition between these automaton states is a ``task divider''. In
particular, given $\phi$ and its LDGBA $\mathcal{{A}}_{\phi}$,
we propose a modular architecture of $\left|Q\right|$ DDPG respectively, i.e., $\boldsymbol{\pi}_{q_{i}}\left(x\left|\theta^{u}\right.\right)$
and $Q_{q_{i}}\left(x,u^{\mathcal{P}}\left|\theta^{Q}\right.\right)$
with $q_{i}\in Q$, along with their own replay buffer. Experience
samples are stored in each replay modular buffer $B_{q_{i}}$ in the
form of $\left(x,u^{\mathcal{P}},R\left(x\right),\gamma\left(x\right),x'\right)$.
By dividing the LTL task into sub-stages, the set of neural nets acts
in a global modular DDPG architecture, which allows the
agent to jump from one module to another by switching between the
set of neural nets based on transition relations of $\mathcal{{A}}_{\phi}$.

Fig. \ref{fig:modular_DDPG} shows the modular DDPG architecture corresponding to the LDGBA $\mathcal{A}_{\varphi_{P}}$ shown in Fig. \ref{fig:reward_shaping} without the sink node, where each network represents the standard DDPG structure along with an automaton state, and the transitions between each DDPG are consistent with the edges in $\mathcal{A}_{\varphi_{P}}$. The shaped reward function consisting of $R\left(x\right)$ and $\Phi\left(x\right)$ is capable of guiding the transitions among the modular neural networks.

The proposed method to solve a continuous MDP with
LTL specifications is summarized in  Alg. \ref{Alg2}, and the product states of EP-MDP are synchronized on-the-fly (line 9-12).
We assign each DDPG an individual
replay buffer $B_{q_{i}}$ and a random process noise $N_{q_{i}}$.
The corresponding weights of modular networks, i.e., $Q_{q_{i}}\left(x,u^{\mathcal{P}}\left|\theta^{\boldsymbol{Q}_{q_{i}}}\right.\right)$
and $\boldsymbol{\pi}_{q_{i}}\left(x\left|\theta^{u_{q_{i}}}\right.\right)$,
are also updated at each iteration (line 15-20). All neural networks
are trained using their own replay buffer, which is a finite-sized cache that
stores transitions sampled from exploring the environment. Since the direct
implementation of updating target networks can be unstable and divergent
\cite{Mnih2015}, the soft update (\ref{eq:soft_update}) is employed,
where target networks are slowly updated via relaxation (line 20). Note that for each iteration we first observe the output of the shaped reward function $R'$, then execute the update process via $f_{V}$
and $f_{\Phi}$ (line 10-12). Note that since DDPG leverages a deep neural network to approximate the action-value function, and that in practice we have to stop the training after finite number of steps the synthesised policy might be sub-optimal with respect to the true $\boldsymbol{\pi}^{*}$ as in Theorem \ref{thm:probability}. This is due to the nature of DDPG algorithm and non-linear function approximation in deep architectures whose analysis is out of the scope of this paper. 

\begin{rem}
	Standard embeddings, such as the one-hot and integer encoding, have
	been applied with a single DDPG network to estimate the Q-function
	\cite{Wang2020}. However,
	it is observed that single DDPG network exhibits undesirable success
	rate. The modular DDPG with the EP-MDP in this work is capable of recording
	the unvisited accepting sets for each round, which outperforms
	the result of \cite{Wang2020}.
\end{rem}

\section{Case Studies\label{sec:Case}}

The developed modular DDPG-based control synthesis is implemented
in Python, and all simulation videos and source codes can be found
in our Github repository\footnote{\url{https://github.com/mingyucai/Modular_Deep_RL_E-LDGBA}}. Owl \cite{Kretinsky2018} is used to convert LTL specifications
to LDGBA. Simulations are carried out on a machine with 2.80 GHz quad-core
CPU, 16 GB RAM, and an external Nvidia RTX 1080 GPU. We test the algorithm
in $4$ different environments in OpenAI gym and with $6$ LTL tasks. In particular, we first assign complex tasks, formulated via LTL, to Ball-Pass and CartPole problems. To show the advantage of EP-MDP with modular
DDPG, we compare the method with (i) the standard product
MDP using modular DDPG and (ii) the EP-MDP using
standard DDPG. Then, we test the scalability of our algorithm in pixel-based Mars exploration environments. Finally, we analyze and compare the performances of the modular and standard DDPG algorithms via success rates of task accomplishments.

\subsection{Ball-Pass and Cart-Pole\label{sec:Case1}}

\begin{figure}
	\centering{}\includegraphics[scale=0.35]{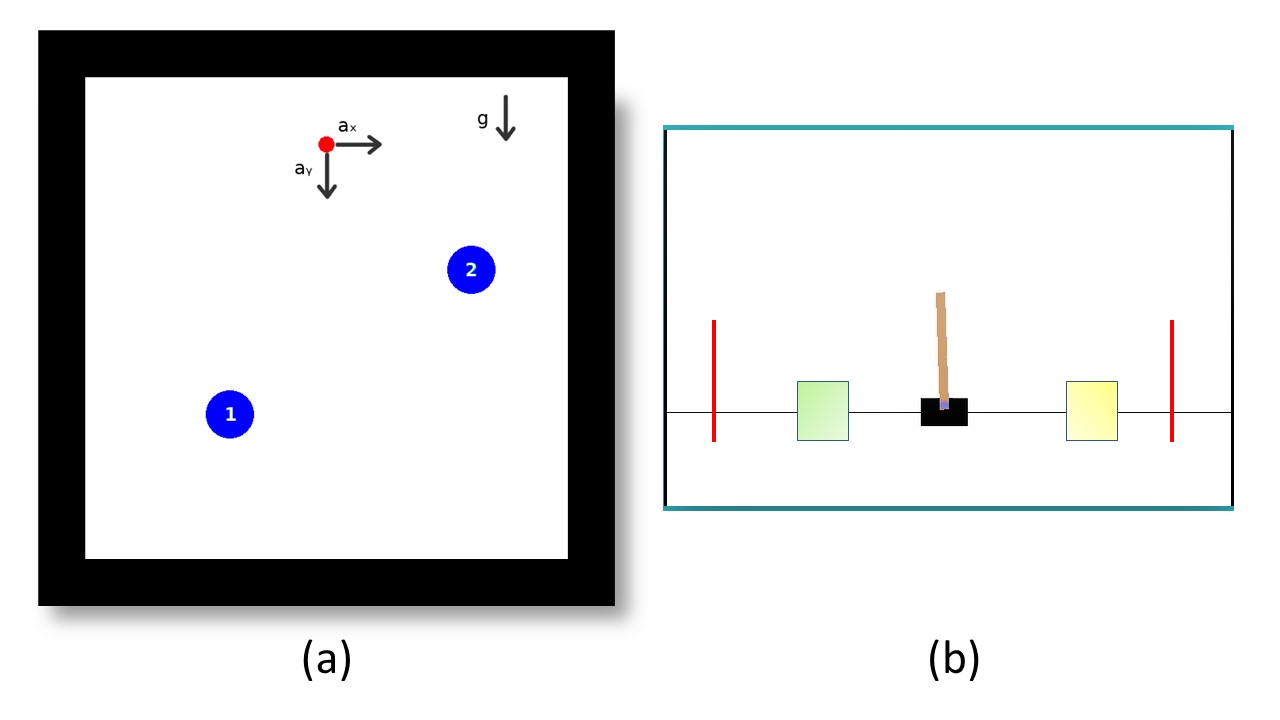}\caption{\label{fig:case_study3} (a) Ball-Pass and (b) Cart-Pole OpenAI
		environment.}
\end{figure}

%%%TIKZ FORMAT%%%
\begin{figure}[!t]\centering
	\subfloat[]{{
	    \scalebox{.8}{
			\begin{tikzpicture}[shorten >=1pt,node distance=2.5cm,on grid,auto] 
			\node[state,initial] (q_0)   {$q_0$}; 
			\node[state,accepting] (q_1) [above right=of q_0, label=above:$\textcolor{ForestGreen}{F_1}$]  {$q_1$};
			\node[state,accepting, label=below:$\textcolor{ForestGreen}{F_2}$] (q_2) [below right=of q_0]  {$q_2$};
			\path[->] 
			(q_0) edge [bend left=15] node {$r_1$} (q_1)
			(q_1) edge [bend left=15] node {$r_f$} (q_0)
			(q_0) edge [loop below] node {$r_f$} (q_0)
			(q_1) edge [loop right] node {$r_1$} (q_1)
			(q_2) edge [loop right] node {$r_2$} (q_2)
			(q_0) edge [bend left=15] node {$r_2$} (q_2)
			(q_2) edge [bend left=15] node {$r_f$} (q_0)
			(q_1) edge [bend left=15] node {$r_2$} (q_2)
			(q_2) edge node {$r_1$} (q_1);
			\end{tikzpicture}
			}
			$F=\{\{q_1\}, \{q_2\}\}$
			}}
	\subfloat[]{{
		\scalebox{.8}{
			\begin{tikzpicture}[shorten >=1pt,node distance=2.5cm,on grid,auto] 
			\node[] (s_0)   {}; 
			\node[state, initial] (s_0) [above right=of s_0, label=above:$\textcolor{blue}{\{r_f\}}$]  {$s_0$};
			\node[state] (s_1) [above right=of s_0, label=above:$\textcolor{blue}{\{r_1\}}$]  {$s_1$};
			\node[state] (s_2) [below right=of s_0, label=below:$\textcolor{blue}{\{r_2\}}$]  {$s_2$};
			\path[->] 
			(s_0) edge [bend left=15] node {$\textcolor{red}{a_{f1}}$} (s_1)
			(s_1) edge [bend left=15] node {$\textcolor{red}{a_{1f}}$} (s_0)
			(s_1) edge [loop right] node {$\textcolor{red}{a_{11}}$} (s_1)
			(s_2) edge [loop right] node {$\textcolor{red}{a_{22}}$} (s_2)
			(s_0) edge [bend left=15] node {$\textcolor{red}{a_{f2}}$} (s_2)
			(s_2) edge [bend left=15] node {$\textcolor{red}{a_{2f}}$} (s_0);
			\end{tikzpicture}
			}
			}}
	$\qquad$
	\subfloat[]{{
		\scalebox{.8}{
			\begin{tikzpicture}[shorten >=1pt,node distance=3cm,on grid,auto] 
			\node[state,initial] (s_0)   {$(s_0,q_0)$}; 
			\node[state,accepting] (s_1) [above right=of s_0, label=above:$\textcolor{ForestGreen}{F_1}$]  {$(s_1,q_1)$};
			\node[state] (s_2) [below right=of s_0]  {$(s_0,q_0)$};
			\draw[blue, very thick] (1.2,-3) rectangle (3,-1.2);
			\path (3.65,0) node(x) {\textcolor{blue}{Cycle1}};
			\path (3.65,-2.1) node(x) {\textcolor{blue}{Cycle2}};
			\node[state,accepting] (s_3) [right=of s_1, label=above:$\textcolor{ForestGreen}{F_1}$]  {$(s_1,q_1)$};
			\node[state,accepting] (s_4) [right=of s_2, label=below:$\textcolor{ForestGreen}{F_2}$]  {$(s_2,q_2)$};
			\path[->] 
			(s_0) edge node {$\textcolor{red}{a_{f1}^p}$} (s_1)
			(s_1) edge [bend right=15] node {$\textcolor{red}{a_{1f}^p}$} (s_2)
			(s_2) edge [bend left=15] node {$\textcolor{red}{a_{f1}^p}$} (s_3)
			(s_3) edge [bend left=15] node {$\textcolor{red}{a_{1f}^p}$} (s_2)
			(s_3) edge [loop right] node {$\textcolor{red}{a_{11}^p}$} (s_3)
			(s_2) edge [bend left=15] node {$\textcolor{red}{a_{f2}^p}$} (s_4)
			(s_4) edge [bend left=15] node {$\textcolor{red}{a_{2f}^p}$} (s_2)
			(s_4) edge [loop right] node {$\textcolor{red}{a_{22}^p}$} (s_4)
			;
			\end{tikzpicture}
			}
		}}
		\caption{\label{fig:automaton} (a) LDGBA of LTL formula
		$\varphi_{B1}$. (b) Generalized MDP model for
		Ball-pass case. (c) The standard product MDP.}
	\end{figure}
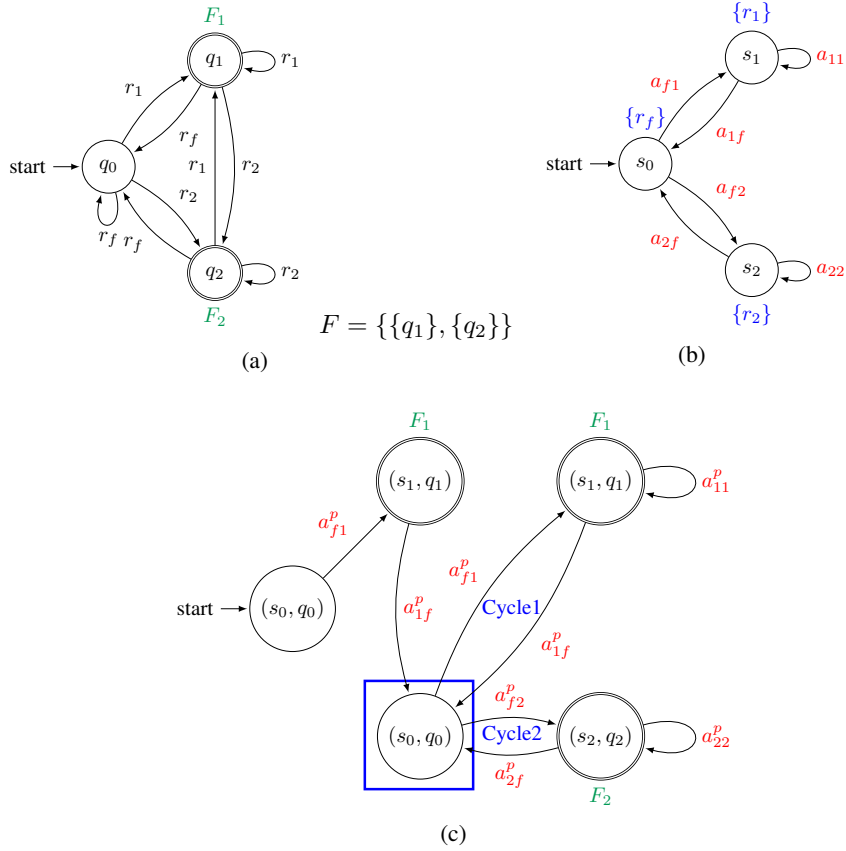

%%%%%%%%%%%%%%%%%

\textbf{(1). Ball-Pass}: We first demonstrate the developed control synthesis in a Ball-Pass
environment ($600m\times600m$). Consider a red ball moving according
to the following dynamics: $\ddot{x}=a_{x}$, $\ddot{y}=a_{y}+g$
in Fig. \ref{fig:case_study3} (a), where $\left(x,y\right)$
is the planar position of the ball, $a_{x},a_{y}$ represent accelerations along $x$
and $y$ induced by an external force (i.e., the control input),
respectively, and $g$ is the gravitational acceleration. The simulation step size is
$\varDelta t=0.05s$, and the acceleration range is $a_{x},a_{y}\in\left[0,1\right]$
$m/s$. We consider two LTL tasks, e.g., $\varphi_{B1}=\left(\oblong\lozenge\mathtt{Region}1\right)\land\left(\oblong\lozenge\mathtt{Region}2\right)$ and $\varphi_{B2}=\lozenge\left(\mathtt{\mathtt{Region}1}\wedge \lozenge\mathtt{\mathtt{Region}2}\right)$. $\varphi_{B1}$ is is a task over the infinite horizon and requires the agent repetitively visits $\mathtt{Region}1$ and $\mathtt{Region}2$, and
$\varphi_{B2}$ is a task over the finite horizon and requires the agent first to visit $\mathtt{Region}1$
and then $\mathtt{Region}2$.

\begin{figure}
	\centering{}\includegraphics[scale=0.45]{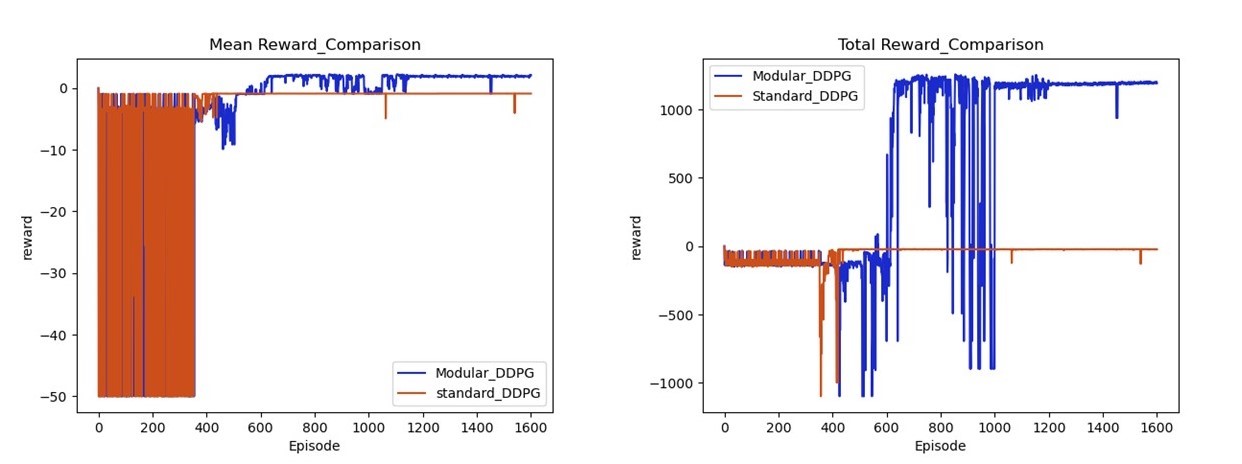}\caption{\label{fig:R_ball-pass_task1} The evolution of reward for $\varphi_{B1}$. (a) Average reward. (b) Total reward.}
\end{figure}

\begin{figure}
	\centering{}\includegraphics[scale=0.45]{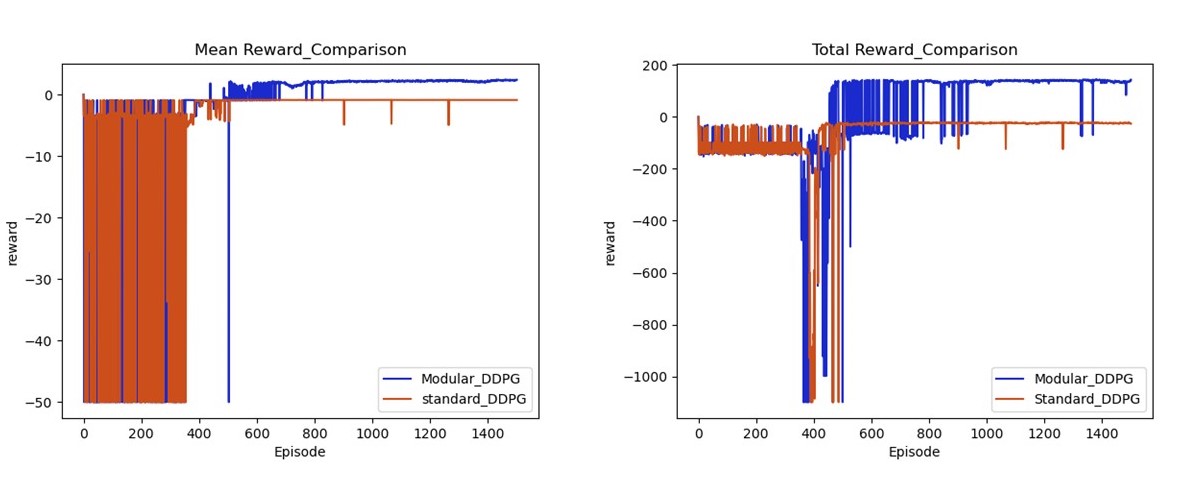}\caption{\label{fig:R_ball-pass_task2} The evolution of reward for $\varphi_{B2}$. (a) Average reward. (b) Total reward.}
\end{figure}
For comparison of method (i), Fig. \ref{fig:automaton} (a) shows the corresponding
LDGBA of $\varphi_{B1}$ with two accepting sets $F=\left\{ \left\{ q_{1}\right\} ,\left\{ q_{2}\right\} \right\} $, and $r_1$, $r_2$ and $r_f$ represent $\mathtt{Region}$ 1, $\mathtt{Region}$ 2 and free space being visited, respectively. Fig. \ref{fig:automaton} (b) shows the Ball-pass MDP model, where
$a_{fi}$ (or $a_{if}$ ) represent a sequence of continuous actions
that drive the ball from free space to $\mathtt{Region}$ $i$ (or
inverse). In addition, Fig. \ref{fig:automaton} (c) shows the resulting standard
product MDP. By  Definition \ref{def:LDGBA}, the policy that satisfies $\varphi_{B1}$
should enforce the repetitive trajectories, i.e., cycle 1 and cycle 2
in Fig. \ref{fig:automaton} (c). However, there exists no deterministic
policy that can periodically select two actions $a_{f1}^{P}$ and
$a_{f2}^{P}$ at state $\left(s_{f},q_{0}\right)$ (marked with a
blue rectangle) in Fig. \ref{fig:automaton} (c). As a result, applying
standard product MDP cannot generate a pure deterministic optimal policy to complete task $\varphi_{B1}$. Such a scenario also happens for task $\varphi_{C1}$. In contrast, the tracking-frontier
set of EP-MDP developed in this work can resolve this issue by recording
unvisited acceptance sets and being embedded with each state at every time-step
via one-hot encoding. We simulate 10 runs for tasks $\varphi_{B1}$ and $\varphi_{C1}$ and select the worst case of applying standard product MDP as comparison. For comparison of method (ii), we conduct $1600$
episodes for each task ($\varphi_{B1}$ and $\varphi_{B2}$) of Ball-pass, and the reward collections
are shown in Fig. \ref{fig:R_ball-pass_task1} and \ref{fig:R_ball-pass_task2}..

\textbf{(2). CartPole}: We also test our control framework for the Cart-pole\footnote{\url{https://gym.openai.com/envs/CartPole-v0/\#barto83}}
in Fig. \ref{fig:case_study3} (b). The pendulum starts upright
with an initial angle between $-0.05$ and $0.05$ rads. The horizontal
force exerted on the cart is defined over a continuous space (action-space)
with a range $\left(-10N,10N\right)$. The green and yellow regions
range from $-1.44$ to $-0.96$ m and from $0.96$ to $1.44$ m, respectively.
The objective is to prevent the pendulum from falling over while moving
the cart between the yellow and green regions. Similarly, we consider two tasks
$\varphi_{C1}=\left(\oblong\lozenge\mathtt{Green}\right)\land\left(\oblong\lozenge\mathtt{Yellow}\right)\land\lnot\oblong\mathtt{Unsafe}$
and $\varphi_{C2}=\lozenge\left(\mathtt{\mathtt{Green}}\wedge\lozenge\mathtt{Yellow}\right)\land\lnot\oblong\mathtt{Unsafe}$,
where $\mathtt{Unsafe}$ represents the condition that the pendulum
falls over or exceeds the red line, and $\mathtt{Green},\mathtt{Yellow}$
represent colored areas. Method (i) has the similar issues as shown in Fig. for task $\varphi_{C1}$ over infinite horizon. To compare the method (ii), we conduct mean reward collections over $1500$ episodes for each task ($\varphi_{C1}$ and $\varphi_{C2}$) of CartPole shown in Fig. \ref{fig:Cart-Pole}. We can conclude the modular DDPG has a better performance of collecting positive rewards during learning. The figures also illustrate that the modular DDPG has a better performance than the standard DDPG for tasks over both finite and infinite horizons. We simulate 10 runs and select the worst case of applying standard product MDP as comparison that can be found in our Github repository. 

\begin{figure}
	\centering{}\includegraphics[scale=0.45]{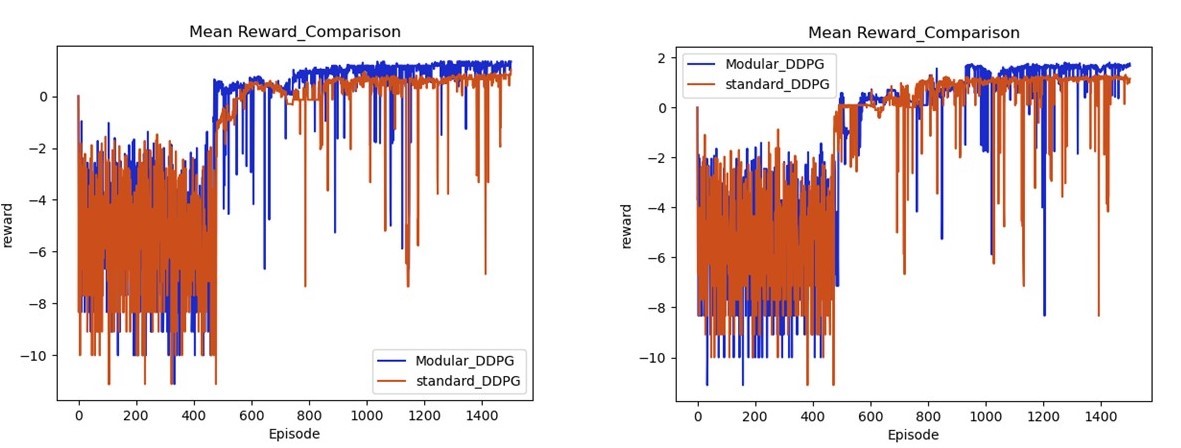}\caption{\label{fig:Cart-Pole} The evolution of average reward. (a) Task $\varphi_{C1}$. (b) Task $\varphi_{C2}$.}
\end{figure}

\subsection{Image-based Environment \label{subsec:Large_scale}}

\begin{figure}
	\centering{}\includegraphics[scale=0.45]{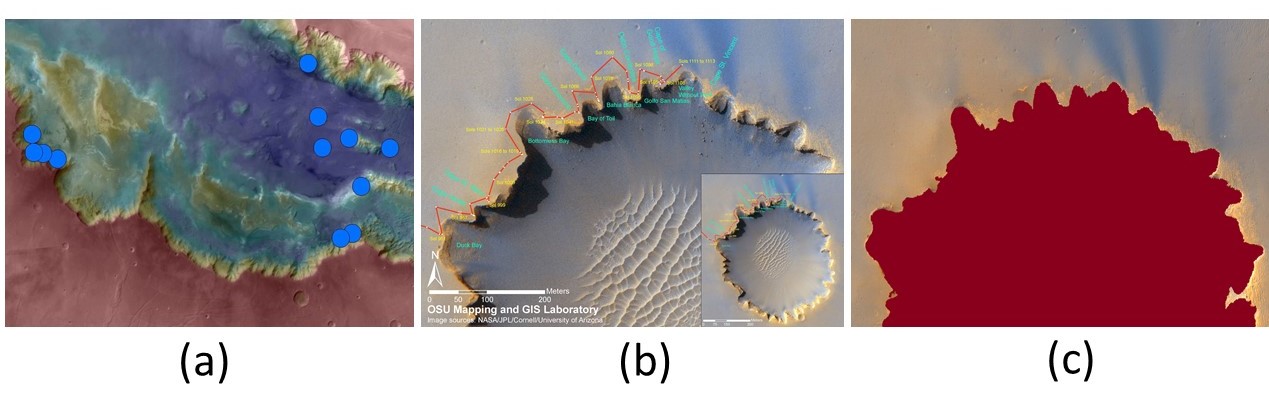}\caption{\label{fig:Mars_environment} Mars exploration mission. (a) Melas
		Chasma with possible location of water (blue dots). (b) Victoria Crater
		with exploration path (red line). (c) Processed image of Victoria
		Crater.}
\end{figure}

In this section, we test our algorithm in large scale continuous environments, and conduct motion planning to complete two Mars exploration missions using
satellite images  as shown in Fig. \ref{fig:Mars_environment}. The missions
are to explore areas around the Melas Chasma \cite{Mcewen2014} and
the Victoria Crater \cite{Squyres2009}. In the Melas Chasma of Fig.
\ref{fig:Mars_environment} (a), there are a number of spots with the potential
presence of water, possible river valleys and lakes. According
to NASA, the blue spots are possible locations of water. The LTL task,
in this case, is to visit both clusters by visiting any blue dot in each cluster while avoiding unsafe (red land) regions in Fig.
\ref{fig:Mars_environment} (a). The Victoria Crater in Fig. \ref{fig:Mars_environment}
(b) is an impact crater located near the equator of Mars. Layered
sedimentary rocks are exposed along the wall of crater, providing
information about the ancient surface condition of Mars. The mission
is related to visiting all spots along with the path of the well-known Mars
Rover Opportunity that are given in Fig. \ref{fig:Mars_environment}
(b), and avoiding the unsafe areas (red regions in Fig. \ref{fig:Mars_environment}
(c)). We employ LTL to specify such missions. The LTL specifications
for Melas Chasma and Victoria Crater are expressed as following separately:
\[
\varphi_{Melas}=\oblong\lozenge\mathtt{M_{1}}\land\oblong\lozenge\mathtt{M_{2}}\land\lnot\oblong\mathtt{M_{unsafe}},
\]
\[
\varphi_{Victoria}=\oblong\lozenge V_{1}\land\oblong\lozenge V_{2}\ldots\oblong\lozenge V_{12}\land\lnot\oblong\mathtt{\mathtt{V_{unsafe}}},
\]
where $\mathtt{M_{i}}$ represents $i$-th target and $\mathtt{\mathtt{M_{unsafe}}}$
indicates unsafe areas in Melas Chasma. The task $\varphi_{Victoria}$
in Victoria Crater has similar settings. At each stage, the rover
agent has a continuous action-range $\left[0,2\pi\right)$ and the
resulting outcome of each action is to move toward the direction of
the action within a range drawn from $\left(0,D\right]$. The dimensions
and action ranges of Fig. \ref{fig:Mars_environment} (b) and Fig.
\ref{fig:Mars_environment} (b) are $456km\times322km$, $D=2km$ and
$746m\times530m$, $D=10m$, respectively.

The LDGBA for $\varphi_{Melas}$ has $3$ states and $2$ accepting
sets, and the one for $\varphi_{Victoria}$ has $13$ states and $12$
accepting sets. The scenario is to train a modular DDPG with the tuning
reward design described in Section \ref{subsec:RL-reward}, which
can autonomously satisfy the complex tasks by accessing the images.

\begin{figure}
	\centering{}\includegraphics[scale=0.28]{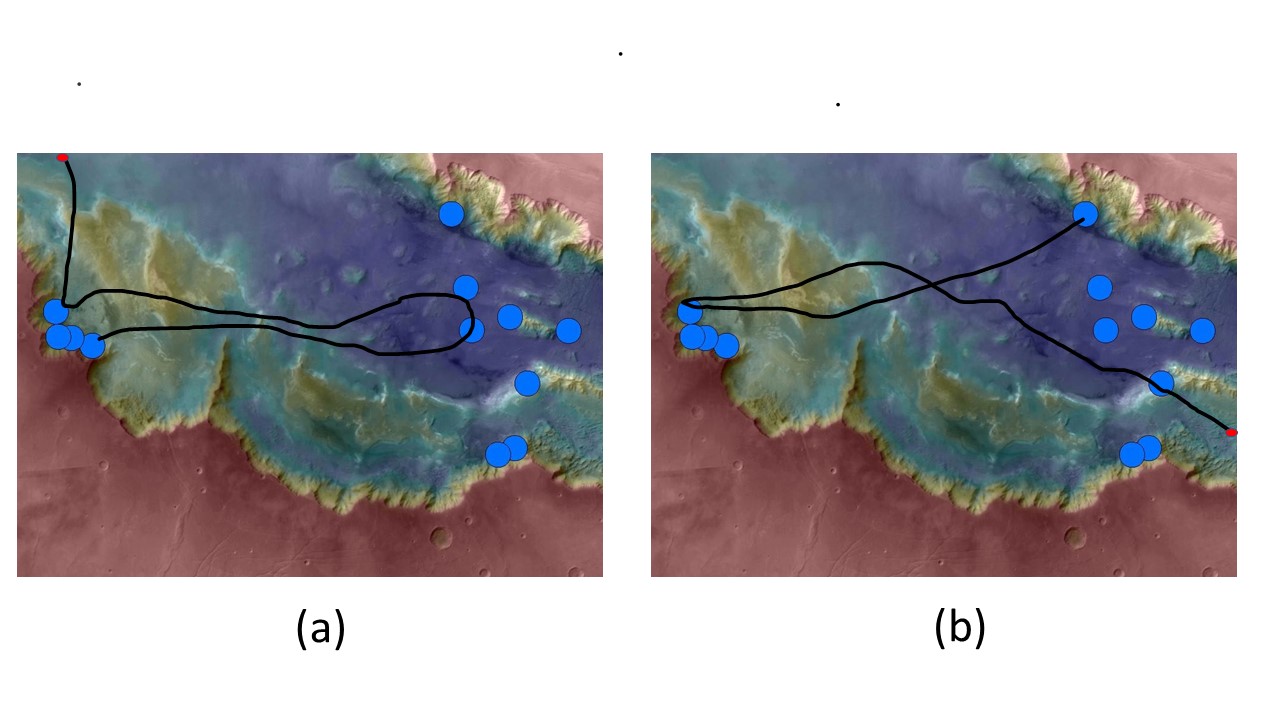}\caption{\label{fig: Melas_traj.} The trajectories with different
		initial locations (image 
		coordinates). (a) Top-side with initial coordinates (2,18). (b) Right-side with initial coordinate (105,199).}
\end{figure}

\begin{figure}
	\centering{}\includegraphics[scale=0.3]{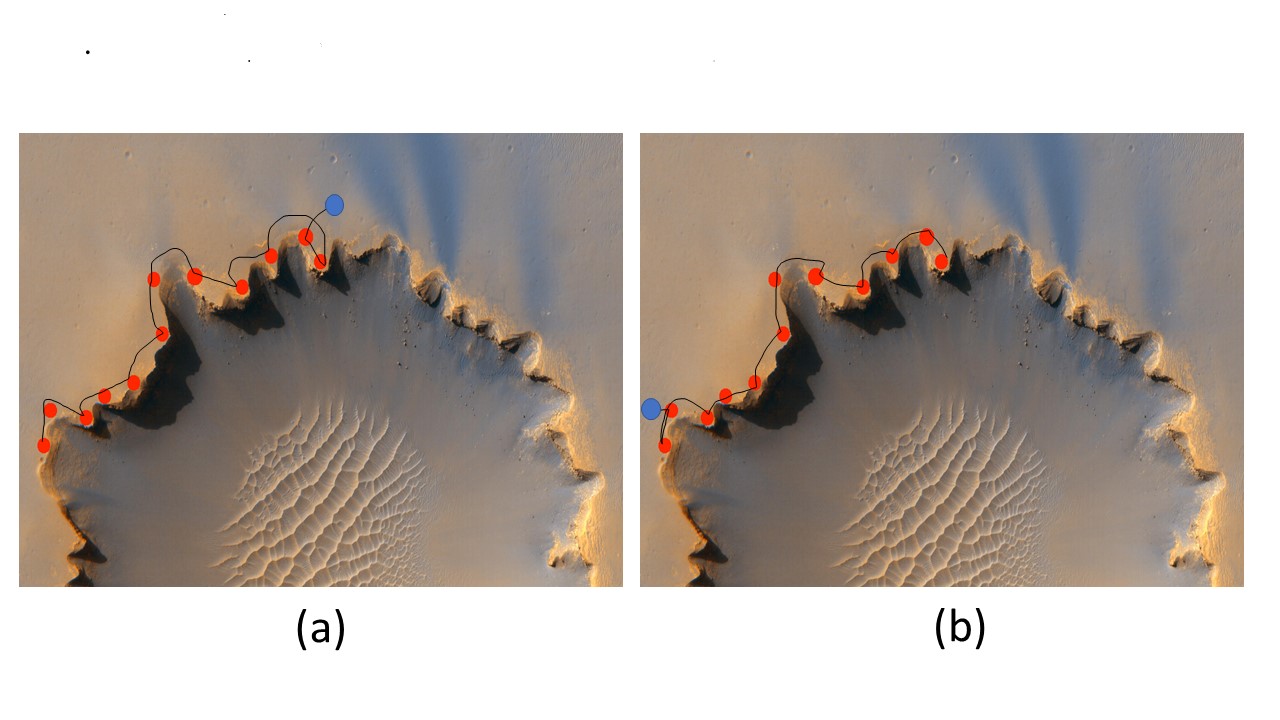}\caption{\label{fig: Victorria_traj.} The trajectories with different
		initial locations (marked as blue cycles). (a) top-right. (b) bottom-left}
\end{figure}

The surveillance task $\varphi_{Melas}$ requires visiting right and
left clusters in Fig. \ref{fig:Mars_environment} (b), and the similar
task $\varphi_{Victoria}$ requires visiting all the spots associated
with the path in Fig. \ref{fig:Mars_environment} (b). After training, we show
the trajectories of one round satisfaction with different initial locations for
the two tasks in Fig. \ref{fig: Melas_traj.} and \ref{fig: Victorria_traj.}.
The training time is influenced by the number of actor-critic neural
network pairs.

\subsection{Results Summary \label{subsec:summary}}
Based on the experimental results, we can summarize our contributions compared with two other methods: (i) the modular DDPG with standard product MDP, (ii) the standard DDPG with EP-MDP.
 
As analyzed in Fig. \ref{fig:automaton}, the modular DDPG with standard product MDP can not be guaranteed to synthesise pure deterministic policies for a number of LTL properties, e.g., $\varphi_{B1}$, $\varphi_{C1}$, $\varphi_{Melas}$ and $\varphi_{Victoria}$. The reason is that the corresponding LDGBA for each of them has multiple accepting sets, and there exist no deterministic policies for one state of $\mathcal{P}$ to select different actions.

\begin{table}
	\caption{\label{tab:success_rate}Comparison of standard and modular DDPG with
		EP-MDP. Statistics are taken over 200 runs.}
	\centering{}\resizebox{0.4\textwidth}{!}{%%
		\begin{tabular}{c|cc}
			\hline 
			LTL Task & DDPG & Success rate\tabularnewline
			\hline 
			\multirow{2}{*}{$\varphi_{B1}$} & \multicolumn{1}{c}{Modular DDPG} & $100\%$\tabularnewline
			\cline{2-3} \cline{3-3} 
			& Standard DDPG & $0\%$\tabularnewline
			\hline 
			\multirow{2}{*}{$\varphi_{B2}$} & Modular DDPG & $100\%$\tabularnewline
			\cline{2-3} \cline{3-3} 
			& Standard DDPG & $77.5\%$\tabularnewline
			\hline 
			\multirow{2}{*}{$\varphi_{C1}$} & Modular DDPG & $100\%$\tabularnewline
			\cline{2-3} \cline{3-3} 
			& Standard DDPG & $23.5\%$\tabularnewline
			\hline 
			\multirow{2}{*}{$\varphi_{C2}$} & Modular DDPG & $100\%$\tabularnewline
			\cline{2-3} \cline{3-3} 
			& Standard DDPG & $61\%$\tabularnewline
			\hline 
			\multirow{2}{*}{$\varphi_{Melas}$} & Modular DDPG & $100\%$\tabularnewline
			\cline{2-3} \cline{3-3} 
			& Standard DDPG & $0\%$\tabularnewline
			\hline 
			\multirow{2}{*}{$\varphi_{Victoria}$} & Modular DDPG & $100\%$\tabularnewline
			\cline{2-3} \cline{3-3} 
			& Standard DDPG & $0\%$\tabularnewline
			\hline 
	\end{tabular}}
\end{table}

Then, we take $200$ runs for all tasks above and analyze the success rate for all aforementioned tasks compared with method (ii) in Table \ref{tab:success_rate}. In practice, since the training process of a deep neural network has limited steps for each episode and finite number of total episodes, and dimensions of automaton structure grows for more complex tasks, it becomes difficult for standard DDPG to explore the whole tasks especially for tasks over infinite horizon. We can conclude
that the standard DDPG with a recording method as \cite{Wang2020} might yield poor performance results for tasks with repetitive pattern
(infinite horizon), whereas the modular DDPG has better performance
from the perspective of training and success rate. It also shall be noted that the surveillance tasks were mainly considered in this paper. However, the temporal operators in LTL, such as "next" and "until", enable the modular DDPG algorithm to solve problems with other types of complex tasks, e.g., fairness and safety \cite{Hasanbeig2019,cautiousRL}. For instance, consider the following LTL formula
$$\varphi=a\wedge\bigcirc(\lozenge\square a\vee\lozenge\square b)$$   

\begin{figure}[!t]\centering
	\scalebox{.75}{
		\begin{tikzpicture}[shorten >=1pt,node distance=2.7cm,on grid,auto] 
		\node[state,initial] (q_0)   {$q_0$}; 
		\node[state] (q_1) [right=of q_0]  {$q_1$}; 
		\node[state,accepting] (q_2) [above right=of q_1] {$q_2$}; 
		\node[state,accepting] (q_3) [below right=of q_1] {$q_3$};
		\node (N)[draw=blue, fit= (q_0)(q_1),dashed,ultra thick,inner sep=3mm] {};
		\node [yshift=2ex, blue] at (N.north) {${Q}_N$};
		\node (D)[draw=orange, fit= (q_2)(q_3),dashed,ultra thick,inner sep=3mm] {};
		\node [yshift=2ex, orange] at (D.north) {${Q}_D$}; 
		\path[->] 
		(q_0) edge node {$a$} (q_1)
		(q_1) edge  node {$\varepsilon$} (q_2)
		(q_1) edge [loop above] node {$\mathit{true}$} (q_1)
		(q_1) edge [below] node {$\varepsilon~~~$} (q_3)
		(q_2) edge  [loop right] node {$a$} (q_2)
		(q_3) edge  [loop right] node {$b$} (q_3);
		\end{tikzpicture}}
	\caption{LDGBA for the formula  $a\wedge\bigcirc(\lozenge\square a\vee\lozenge\square b)$.}
	\label{fig:ldgba_ex}  
\end{figure}

In those cases, LTL formulas can be converted to LDGBAs \cite{Sickert2016}, as in Fig.~\ref{fig:ldgba_ex}, and the same framework is applied.

At last, although several actor-critic neural network pairs are adopted in the modular architecture, they are synchronously trained online, and each of them is only responsible for a sub-task. This setup is more effective to complete LTL complex tasks. The training time for each task with two different algorithms (the standard DDPG and the modular DDPG) is shown in Table \ref{tab:training}. It can be seen that the runtime complexity is not increased in the modular DDPG.
\begin{table}
	\caption{\label{tab:training}Training time analysis of standard and modular DDPG with
		EP-MDP.}
	\centering{}%
	\begin{tabular}{|ccc|cc|c}
	    \hline
	    \multicolumn{3}{|c|}{Tasks and Training Parameters} & \multicolumn{2}{|c|}{Training Time (minute, hour, day)} \\ 
		\hline 
		LTL Task & Maximum steps & Episode & Standard DDPG & Modular DDPG  \\ 
		\hline
		$\varphi_{B1}$ & $200$ & $1600$ & $11.3$ min & $12.1$ min  \\
		
		$\varphi_{B2}$ & $500$ & $1500$ & $17.0$ min & $20.4$ min \tabularnewline
		
		$\varphi_{C1}$ & $200$ & $1500$ & $14.1$ min & $13.6$ min \tabularnewline
		
		$\varphi_{C2}$ & $500$ & $1500$ &$20.2$ min & $21.0$ min \tabularnewline
		
		$\varphi_{Melas}$ & $2000$ & $10000$ & $5.5$ hr & $5.4$ hr \tabularnewline
		
		$\varphi_{Victoria}$ & $8000$ & $50000$ & $48.0$ hr & $50.0$ hr \tabularnewline
	
		\hline 
	\end{tabular}
\end{table}

\section{Conclusions}

In this paper, a model-free deep RL learning is developed to synthesize
control policies in continuous-state and continuous-action MDPs. The designed EP-MDP can enforce the task satisfaction, and a modular
DDPG framework is proposed to decompose the temporal automaton into
interrelated sub-tasks such that the resulted optimal policies are
shown to satisfy the LTL specifications with a higher success rate compared
to the standard DDPG. Based on the comparisons, we demonstrate the benefits of applying LDGBA and modular DDPG to improve the learning performance and enforce the task satisfaction with high success rates.

\section{Appendix}

\subsection{Proof of Theorem~\ref{lemma:probability} \label{append:1}}
Based on whether or not the path $\boldsymbol{x}_{t}$ intersects
with accepting states of $F_{i}^{\mathcal{\mathcal{P}}}$, the expected
return in (\ref{eq:ExpRetrn}) can be rewritten as 
\begin{equation}
\begin{aligned}U^{\boldsymbol{\pi}}\left(x\right)= & \mathbb{E}^{\boldsymbol{\pi}}\left[\mathcal{D}\left(\boldsymbol{x}_{t}\right)\left|\boldsymbol{x}_{t}\models\diamondsuit F_{i}^{\mathcal{\mathcal{P}}}\right.\right]\cdot\Pr{}^{\boldsymbol{\pi}}\left[x\mid=\diamondsuit F_{i}^{\mathcal{\mathcal{P}}}\right]\\
 & +\mathbb{E}^{\boldsymbol{\pi}}\left[\mathcal{D}\left(\boldsymbol{x}_{t}\right)\left|\boldsymbol{x}_{t}\neq\diamondsuit F_{i}^{\mathcal{\mathcal{P}}}\right.\right]\cdot\Pr{}^{\boldsymbol{\pi}}\left[x\mid\neq\diamondsuit F_{i}^{\mathcal{\mathcal{P}}}\right]
\end{aligned}
\label{eq:proof 1}
\end{equation}
where $\Pr^{\boldsymbol{\pi}}\left[x\models\diamondsuit F_{i}^{\mathcal{\mathcal{P}}}\right]$
and $\Pr^{\boldsymbol{\pi}}\left[x\mid\neq\diamondsuit F_{i}^{\mathcal{\mathcal{P}}}\right]$
represent the probability of eventually reaching and not reaching $F_{i}^{\mathcal{\mathcal{P}}}$
eventually under policy $\pi$ starting from state $x$, respectively.

To find the lower bound of $U^{\boldsymbol{\pi}}\left(x\right)$,
for any $\boldsymbol{x}_{t}$ with $\boldsymbol{x}_{t}\left[t\right]=x$,
let $t+N_{t}$ be the index that $\boldsymbol{x}_{t}$ first intersects
a state in $X_{\mathcal{\mathcal{\mathcal{P}}}}^{\boldsymbol{\pi}}$,
i.e., $N_{t}=\min\left[i\bigl|\boldsymbol{x}_{t}\left[t+i\right]\in X_{\mathcal{\mathcal{\mathcal{P}}}}^{\boldsymbol{\pi}}\right]$.
The following holds
\begin{equation}
\begin{aligned} & \mathbb{E}^{\boldsymbol{\pi}}\left[\mathcal{D}\left(\boldsymbol{x}_{t}\right)\left|\boldsymbol{x}_{t}\models\diamondsuit F_{i}^{\mathcal{\mathcal{P}}}\right.\right]\\
 & \overset{_{^{\left(1\right)}}}{\geq}\mathbb{E}^{\boldsymbol{\pi}}\left[\mathcal{D}\left(\boldsymbol{x}_{t}\right)\left|\boldsymbol{x}_{t}\cap X_{\mathcal{\mathcal{\mathcal{P}}}}^{\boldsymbol{\pi}}\neq\emptyset\right.\right]\\
 & \overset{_{^{\left(2\right)}}}{\geq}\mathbb{E}^{\boldsymbol{\pi}}\left[\gamma_{F}^{N_{t}}\cdot\mathcal{D}\left(\boldsymbol{x}_{t}\left[t+N_{t}:\right]\right)\left|\boldsymbol{x}_{t}\left[t+N_{t}\right]=x\right|\boldsymbol{x}_{t}\cap X_{\mathcal{\mathcal{\mathcal{P}}}}^{\boldsymbol{\pi}}\neq\emptyset\Bigr]\right.\\
 & \overset{_{^{\left(3\right)}}}{\geq}\mathbb{E}^{\boldsymbol{\pi}}\left[\gamma_{F}^{N_{t}}\Bigl|\right.\boldsymbol{x}_{t}\cap X_{\mathcal{\mathcal{\mathcal{P}}}}^{\boldsymbol{\pi}}\neq\emptyset\Bigr]\cdot U_{\min}^{\boldsymbol{\pi}}\left(\boldsymbol{x}_{t}\left[t+N_{t}\right]\right)\\
 & \overset{_{^{\left(4\right)}}}{\geq}\gamma_{F}^{\mathbb{E}^{\boldsymbol{\pi}}\left[N_{t}\left|\boldsymbol{x}_{t}\left[t\right]=x\right|\boldsymbol{x}_{t}\cap X_{\mathcal{\mathcal{\mathcal{P}}}}^{\boldsymbol{\pi}}\neq\emptyset\right]}\cdot U_{\min}^{\boldsymbol{\pi}}\left(x_{Acc}\right)\\
 & =\gamma_{F}^{n_{t}}\cdot U_{\min}^{\boldsymbol{\pi}}\left(x_{Acc}\right),
\end{aligned}
\label{eq:proof 2}
\end{equation}

where $x_{Acc}\in X_{\mathcal{\mathcal{\mathcal{P}}}}^{\boldsymbol{\pi}}$,
$U_{\min}^{\boldsymbol{\pi}}\left(x_{Acc}\right)=\min_{x\in X_{\mathcal{\mathcal{\mathcal{P}}}}^{\boldsymbol{\pi}}}U^{\boldsymbol{\pi}}\left(x\right)$,
and $n_{t}$ is a constant. By Lemma \ref{lemma:1}, one has $\underset{\gamma_{F}\shortrightarrow1^{-}}{\lim}U_{\min}^{\boldsymbol{\pi}}\left(x_{Acc}\right)=~1$.
In (\ref{eq:proof 2}), the first inequality \textbf{(1)} holds because visiting $X_{\mathcal{\mathcal{\mathcal{P}}}}^{\boldsymbol{\pi}}$
is one of the cases for $\diamondsuit F_{i}^{\mathcal{\mathcal{P}}}$ that satisfy $\boldsymbol{x}_{t}\models\diamondsuit F_{i}^{\mathcal{\mathcal{P}}}$, e.g., $F_{i}^{\mathcal{\mathcal{P}}}$ can be placed outside of all BSCCs;
the second inequality \textbf{(2)} holds due to Lemma \ref{lemma:1}; the third
inequality \textbf{(3)} holds due to the Markov properties of (\ref{eq:DisctRetrn})
and (\ref{eq:ExpRetrn}); the fourth inequality \textbf{(4)} holds due to Jensen's
inequality. Based on (\ref{eq:proof 2}), the lower bound of (\ref{eq:proof 1})
is $U^{\boldsymbol{\pi}}\left(x\right)\geq\gamma_{F}^{n_{t}}\cdot U_{\min}^{\boldsymbol{\pi}}\left(x_{Acc}\right)\cdot\Pr{}^{\boldsymbol{\pi}}\left[x\models\diamondsuit F_{i}^{\mathcal{\mathcal{P}}}\right]$
from which one has 
\begin{equation}
\underset{\gamma_{F}\shortrightarrow1^{-}}{\lim}U^{\boldsymbol{\pi}}\left(x\right)\geq\gamma_{F}^{n_{t}}\cdot\Pr{}^{\boldsymbol{\pi}}\left[x\models\diamondsuit F_{i}^{\mathcal{\mathcal{P}}}\right].\label{eq:lower bound}
\end{equation}

Similarly, let $t+M_{t}$ denote the index that $\boldsymbol{x}_{t}$
first enters the BSCC that contains no accepting states. We have
\begin{equation}
\begin{array}{c}
\mathbb{E}^{\boldsymbol{\pi}}\left[\mathcal{D}\left(\boldsymbol{x}_{t}\right)\left|\boldsymbol{x}_{t}\mid\neq\diamondsuit F_{i}^{\mathcal{\mathcal{P}}}\right.\right]\overset{_{^{\left(1\right)}}}{\leq}\mathbb{E}^{\pi}\left[1-r_{F}^{M_{t}}\left|\boldsymbol{x}_{t}\mid\neq\diamondsuit F_{i}^{\mathcal{\mathcal{P}}}\right.\right]\\
\overset{_{^{\left(2\right)}}}{\leq}1-r_{F}^{\mathbb{E}^{\boldsymbol{\pi}}\left[M_{t}\left|\boldsymbol{x}_{t}\left[t\right]=x,\right.\boldsymbol{x}_{t}\mid\neq\diamondsuit F^{\mathcal{P}}\right]}=1-r_{F}^{m_{t}}
\end{array}\label{eq:proof 3}
\end{equation}
where $m_{t}$ is a constant and (\ref{eq:proof 3}) holds due to
Lemma \ref{lemma:1} and Markov properties.

Hence, the upper bound of (\ref{eq:proof 1}) is obtained as 
\begin{equation}
\underset{\gamma_{F}\shortrightarrow1^{-}}{\lim}U^{\boldsymbol{\pi}}\left(x\right)\leq\Pr{}^{\boldsymbol{\pi}}\left[x\models\diamondsuit F_{i}^{\mathcal{\mathcal{P}}}\right]+\left(1-r_{F}^{m_{t}}\right)\Pr{}^{\boldsymbol{\pi}}\left[x\mid\neq\diamondsuit F_{i}^{\mathcal{\mathcal{P}}}\right].\label{eq:upper bound}
\end{equation}
By (\ref{eq:lower bound}) and (\ref{eq:upper bound}), we can conclude
\[
\begin{array}{c}
\gamma_{F}^{n_{t}}\cdot\Pr{}^{\boldsymbol{\pi}}\left[x\models\diamondsuit F_{i}^{\mathcal{\mathcal{P}}}\right]\leq\underset{\gamma_{F}\shortrightarrow1^{-}}{\lim}U^{\boldsymbol{\pi}}\left(x\right)\\
\leq\Pr{}^{\boldsymbol{\pi}}\left[x\models\diamondsuit F_{i}^{\mathcal{\mathcal{P}}}\right]+\left(1-r_{F}^{m_{t}}\right)\cdot\Pr{}^{\boldsymbol{\pi}}\left[x\mid\neq\diamondsuit F_{i}^{\mathcal{\mathcal{P}}}\right]
\end{array}
\]
According to $\underset{\gamma_{F}\shortrightarrow1^{-}}{\lim}r_{F}\left(\gamma_{F}\right)=1$
in the reward function, (\ref{eq:reachability}) can be concluded. 

\subsection{Proof of Theorem~\ref{thm2} \label{Append:2}}

For any policy $\boldsymbol{\pi}$, $MC_{\mathcal{\mathcal{{\mathcal{P}}}}}^{\boldsymbol{\pi}}=\ensuremath{\mathcal{T}_{\boldsymbol{\pi}}}\sqcup\ensuremath{\mathcal{R}_{\boldsymbol{\pi}}^{1}\sqcup\ensuremath{\mathcal{R}_{\boldsymbol{\pi}}}^{2}\ldots\ensuremath{\mathcal{R}_{\boldsymbol{\pi}}}^{n_{R}}}.$
Let $\boldsymbol{U}_{\boldsymbol{\pi}}=\left[\begin{array}{ccc}
U^{\boldsymbol{\pi}}\left(x_{0}\right) & U^{\boldsymbol{\pi}}\left(x_{1}\right) & \ldots\end{array}\right]^{T}\in\mathbb{R}^{\left|X\right|}$ denote the stacked expected return under policy $\pi$, which can
be reorganized as 
\begin{equation}
\begin{aligned}\left[\begin{array}{c}
\boldsymbol{U}_{\boldsymbol{\pi}}^{tr}\\
\boldsymbol{U}_{\boldsymbol{\pi}}^{rec}
\end{array}\right]= & \stackrel[n=0]{\infty}{\sum}\left(\stackrel[j=0]{n-1}{\prod}\left[\begin{array}{cc}
\boldsymbol{\boldsymbol{\gamma}}_{\boldsymbol{\pi}}^{\ensuremath{\mathcal{T}}} & \boldsymbol{\boldsymbol{\gamma}}_{\boldsymbol{\pi}}^{\ensuremath{tr}}\\
\boldsymbol{0}_{\sum_{i=1}^{m}N_{i}\times r} & \boldsymbol{\boldsymbol{\gamma}}_{\boldsymbol{\pi}}^{rec}
\end{array}\right]\right)\\
 & \cdot\left[\begin{array}{cc}
\boldsymbol{P}_{\boldsymbol{\pi}}\left(\ensuremath{\mathcal{T}},\ensuremath{\mathcal{T}}\right) & \boldsymbol{P}_{\boldsymbol{\pi}}^{tr}\\
\boldsymbol{0}_{\sum_{i=1}^{m}N_{i}\times r} & \boldsymbol{P}_{\boldsymbol{\pi}}\left(\mathcal{R},\mathcal{R}\right)
\end{array}\right]^{n}\left[\begin{array}{c}
\boldsymbol{R}_{\boldsymbol{\pi}}^{tr}\\
\boldsymbol{R}_{\boldsymbol{\pi}}^{rec}
\end{array}\right],
\end{aligned}
\label{eq: utility_function}
\end{equation}
where $\boldsymbol{U}_{\boldsymbol{\pi}}^{tr}$ and $\boldsymbol{U}_{\boldsymbol{\pi}}^{rec}$
are the expected return of states in transient and recurrent classes
under policy $\boldsymbol{\pi}$, respectively. In (\ref{eq: utility_function}),
$\boldsymbol{P}_{\boldsymbol{\pi}}\left(\ensuremath{\mathcal{T}},\ensuremath{\mathcal{T}}\right)\in\mathbb{R}^{r\times r}$
is the probability transition matrix between states in $\ensuremath{\mathcal{T}_{\boldsymbol{\pi}}}$,
and $\boldsymbol{P}_{\boldsymbol{\pi}}^{tr}=\left[P_{\boldsymbol{\pi}}^{tr_{1}}\ldots P_{\boldsymbol{\pi}}^{tr_{m}}\right]\in\mathbb{R}^{r\times\sum_{i=1}^{m}N_{i}}$
is the probability transition matrix where $P_{\boldsymbol{\pi}}^{tr_{i}}\mathbb{\in R}^{r\times N_{i}}$
represents the transition probability from a transient state in $\ensuremath{\mathcal{T}_{\boldsymbol{\pi}}}$
to a state of $\mathcal{R}_{\boldsymbol{\pi}}^{i}$. The $\boldsymbol{P}_{\boldsymbol{\pi}}\left(\mathcal{R},\mathcal{R}\right)$
is a diagonal block matrix, where the $i$th block is a $N_{i}\times N_{i}$
matrix containing transition probabilities between states within $\mathcal{R}_{\boldsymbol{\pi}}^{i}$.
Note that $\boldsymbol{P}_{\boldsymbol{\pi}}\left(\mathcal{R},\mathcal{R}\right)$
is a stochastic matrix since each block matrix is a stochastic matrix
\cite{Durrett1999}. Similarly, the rewards $\boldsymbol{\boldsymbol{R}}_{\boldsymbol{\pi}}$
can also be partitioned into $\boldsymbol{R}_{\boldsymbol{\pi}}^{tr}$
and $\boldsymbol{R}_{\boldsymbol{\pi}}^{rec}$.

The following proof is based on contradiction. Suppose there exists
a policy $\boldsymbol{\pi}^{*}$ that optimizes the expected return,
but does not satisfy the accepting condition of $\mathcal{\mathcal{\mathcal{\mathcal{P}}}}$ with non-zero probability.
Based on Lemma \ref{lemma:accepting set}, the following is true:
$F_{k}^{\mathcal{\mathcal{\mathcal{\mathcal{\mathcal{P}}}}}}\subseteq\ensuremath{\mathcal{T}_{\boldsymbol{\pi}^{*}}},\forall k\in\left\{ 1,\ldots f\right\} $,
where $\ensuremath{\mathcal{T}_{\boldsymbol{\pi}^{*}}}$ denotes the
transient class of Markov chain induced by $\pi^{*}$ on $\mathcal{\mathcal{\mathcal{\mathcal{\mathcal{P}}}}}$.
First, consider a state $x_{R}\in\mathcal{R}_{\boldsymbol{\pi}^{*}}^{j}$
and let $\boldsymbol{P}_{\boldsymbol{\pi}^{*}}^{x_{R}R_{j}}$ denote
a row vector of $\boldsymbol{P}_{\boldsymbol{\pi}^{*}}^{n}\left(\mathcal{R},\mathcal{R}\right)$
that contains the transition probabilities from $x_{R}$ to the states
in the same recurrent class $\mathcal{R}_{\pi^{*}}^{j}$ after $n$
steps. The expected return of $x_{R}$ under $\boldsymbol{\pi}^{*}$
is then obtained from (\ref{eq: utility_function}) as 
\[
U_{\boldsymbol{\pi}^{*}}^{rec}\left(x_{R}\right)=\stackrel[n=0]{\infty}{\sum}\gamma^{n}\left[\boldsymbol{0}_{k_{1}}^{T}\:\boldsymbol{P}_{\pi^{*}}^{x_{R}R_{j}}\:\boldsymbol{0}_{k_{2}}^{T}\right]\boldsymbol{R}_{\boldsymbol{\pi}^{*}}^{rec},
\]
where $k_{1}=\sum_{i=1}^{j-1}N_{i}$, $k_{2}=\sum_{i=j+1}^{n}N_{i}$.
Since $\ensuremath{\mathcal{R}_{\pi^{*}}^{j}}\cap F_{i}^{\mathcal{P}}=\emptyset,\forall i\in\left\{ 1,\ldots f\right\} $,
by the designed reward function, all entries of $\boldsymbol{R}_{\boldsymbol{\pi}^{*}}^{rec}$
are zero. We can conclude $U_{\boldsymbol{\pi}^{*}}^{rec}\left(x_{R}\right)=0$.
To show contradiction, the following analysis will show that $U_{\bar{\boldsymbol{\pi}}}^{rec}\left(x_{R}\right)>U_{\boldsymbol{\pi}^{*}}^{rec}\left(x_{R}\right)$
for any policy $\bar{\boldsymbol{\pi}}$ that satisfies the accepting
condition of $\mathcal{P}$. Thus, it's true that there
exists $\mathcal{R}_{\bar{\boldsymbol{\pi}}}^{j}$ such that $\ensuremath{\mathcal{R}_{\bar{\boldsymbol{\pi}}}^{j}}\cap F_{k}^{\mathcal{\mathcal{\mathcal{\mathcal{P}}}}}\neq\emptyset,\forall k\left\{ 1,\ldots f\right\} $.
We use $\underline{\gamma}$ and $\overline{\gamma}$ to denote the
lower and upper bound of $\gamma$.

\textbf{Case 1:} If $x_{R}\in\mathcal{R}_{\bar{\boldsymbol{\pi}}}^{j}$,
there exist states such that $x_{\varLambda}\in\mathcal{R}_{\bar{\boldsymbol{\pi}}}^{j}\cap F_{i}^{\mathcal{\mathcal{\mathcal{\mathcal{P}}}}}$.
From Lemma \ref{lemma:accepting set}, the entries in $\boldsymbol{R}_{\bar{\boldsymbol{\pi}}}^{rec}$
corresponding to the recurrent states in $\mathcal{R}_{\bar{\boldsymbol{\pi}}}^{j}$
have non-negative rewards and at least there exist $f$ states in
$\mathcal{R}_{\bar{\boldsymbol{\pi}}}^{j}$ from different accepting
sets $F_{i}^{\mathcal{R}}$ with positive reward $1-r_{F}$. From (\ref{eq: utility_function}),
$U_{\bar{\boldsymbol{\pi}}}^{rec}\left(x_{R}\right)$ can be lower
bounded as 
\[
\begin{aligned}U_{\bar{\boldsymbol{\pi}}}^{rec}\left(x_{R}\right) & \geq\stackrel[n=0]{\infty}{\sum}\underline{\gamma}^{n}\left(P_{\bar{\boldsymbol{\pi}}}^{x_{R}x_{\varLambda}}r_{F}\right)\end{aligned}
>0,
\]
where $P_{\bar{\boldsymbol{\pi}}}^{x_{R}x_{\varLambda}}$ is the transition
probability from $x_{R}$ to $x_{\varLambda}$ in $n$ steps. We can
conclude in this case $U_{\bar{\boldsymbol{\pi}}}^{rec}\left(x_{R}\right)>U_{\boldsymbol{\pi}^{*}}^{rec}\left(x_{R}\right)$.

\textbf{Case 2:} If $x_{R}\in\mathcal{T}_{\bar{\boldsymbol{\pi}}}$,
there are no states of any accepting set $F_{i}^{\mathcal{P}}$ in
$\mathcal{T}_{\bar{\boldsymbol{\pi}}}$. As demonstrated in \cite{Durrett1999},
for a transient state $x_{tr}\in\mathcal{T}_{\bar{\boldsymbol{\pi}}}$,
there always exists an upper bound $\Delta<\infty$ such that $\stackrel[n=0]{\infty}{\sum}p^{n}\left(x_{tr},x_{tr}\right)<\Delta$,
where $p^{n}\left(x_{tr},x_{tr}\right)$ denotes the probability of
returning from a transient state $x_{T}$ to itself in $n$ time steps.
In addition, for a recurrent state $x_{rec}$ of $\mathcal{R}_{\bar{\boldsymbol{\pi}}}^{j}$,
it is always true that 
\begin{equation}
\stackrel[n=0]{\infty}{\sum}\gamma^{n}p^{n}\left(x_{rec},x_{rec}\right)>\frac{1}{1-\gamma^{\overline{n}}}\bar{p},\label{eq:case2_1}
\end{equation}
where there exists $\overline{n}$ such that $p^{\overline{n}}\left(x_{rec},x_{rec}\right)$
is nonzero and can be lower bounded by $\bar{p}$ \cite{Durrett1999}.
From (\ref{eq: utility_function}), one has 
\begin{equation}
\begin{aligned}\boldsymbol{U}_{\bar{\boldsymbol{\pi}}}^{tr} & >\stackrel[n=0]{\infty}{\sum}\left(\stackrel[j=0]{n-1}{\prod}\boldsymbol{\boldsymbol{\gamma}}_{\bar{\boldsymbol{\pi}}}^{tr}\right)\ldotp\boldsymbol{P}_{\bar{\boldsymbol{\pi}}}^{tr}\boldsymbol{P}_{\bar{\boldsymbol{\pi}}}^{n}\left(\mathcal{R},\mathcal{R}\right)\boldsymbol{R}_{\pi}^{rec}\\
 & {\color{black}>\underline{\gamma}^{n}\ldotp\boldsymbol{P}_{\bar{\boldsymbol{\pi}}}^{tr}\boldsymbol{P}_{\bar{\boldsymbol{\pi}}}^{n}\left(\mathcal{R},\mathcal{R}\right)\boldsymbol{R}_{\boldsymbol{\pi}}^{rec}}.
\end{aligned}
\label{eq:case2_2}
\end{equation}
Let $\max\left(\cdot\right)$ and $\min\left(\cdot\right)$ represent
the maximum and minimum entry of an input vector, respectively. The
upper bound $\bar{m}=\left\{ \max\left(\overline{M}\right)\left|\overline{M}<\boldsymbol{P}_{\bar{\pi}}^{tr}\boldsymbol{\bar{P}}\boldsymbol{R}_{\pi}^{rec}\right.\right\} $
and $\bar{m}\geq0$, where $\boldsymbol{\bar{P}}$ is a block matrix
whose nonzero entries are derived similarly to $\bar{p}$ in (\ref{eq:case2_1}).
The utility $U_{\bar{\boldsymbol{\pi}}}^{tr}\left(x_{R}\right)$ can
be lower bounded from (\ref{eq:case2_1}) and (\ref{eq:case2_2})
as $U_{\bar{\boldsymbol{\pi}}}^{tr}\left(x_{R}\right)>\frac{1}{1-\underline{\gamma}^{n}}\bar{m}.$
Since $U_{\boldsymbol{\pi}^{*}}^{rec}\left(x_{R}\right)=0$, the contradiction
$U_{\bar{\boldsymbol{\pi}}}^{tr}\left(x_{R}\right)>0$ is achieved
if $\frac{1}{1-\underline{\gamma}^{n}}\bar{m}$. Thus, there exist
$0<\underline{\gamma}<1$ such that $\gamma_{F}>\underline{\gamma}$
and $r_{F}>\underline{\gamma}$, which implies $U_{\bar{\boldsymbol{\pi}}}^{tr}\left(x_{R}\right)>\frac{1}{1-\underline{\gamma}^{n}}\bar{m}\geq0$.
The procedure shows the contradiction of the assumption that $\pi^{*}$
that does not satisfy the acceptance condition of $\mathcal{\mathcal{\mathcal{\mathcal{\mathcal{P}}}}}$ with non-zero probability
is optimal, and Theorem \ref{thm2} is proved. 

\bibliographystyle{IEEEtran}
\bibliography{references}

\end{document}